\let\NAT@parse\undefined
\def\eqref#1{equation~\ref{#1}}
\def\1{\bm{1}}
\DeclareMathAlphabet{\mathsfit}{\encodingdefault}{\sfdefault}{m}{sl}
\SetMathAlphabet{\mathsfit}{bold}{\encodingdefault}{\sfdefault}{bx}{n}
\DeclareMathOperator*{\argmin}{arg\,min}
\newcommand{\pair}[2]{\left( #1, #2\right)}
\newcommand{\set}[1]{\left\lbrace #1\right\rbrace}
\newcommand{\seq}[2]{\left(#1_{1}, #1_{2}, \ldots, #1_{#2}\right)}
\newcommand{\real}[0]{\mathbb{R}}
\newcommand{\expect}[2]{\mathbb{E}_{#1}\left[#2\right]}
\newcommand{\Ind}[0]{\mathbb{I}}
\newcommand{\graph}[0]{G}
\newcommand{\vertexSet}[0]{V}
\newcommand{\edgeSet}[0]{E}
\newcommand{\vertex}[0]{v}
\newcommand{\edge}[0]{e}
\newcommand{\start}[0]{v_s}
\newcommand{\goal}[0]{v_g}
\newcommand{\weight}[0]{w}
\newcommand{\weightMax}[0]{C_{\rm max}}
\renewcommand{\path}[0]{\xi}
\newcommand{\world}[0]{\phi}
\newcommand{\history}[0]{\psi}
\newcommand{\mdp}[0]{\mathcal{M}}
\newcommand{\state}[0]{s}
\newcommand{\stateSet}[0]{\mathcal{S}}
\newcommand{\action}[0]{a}
\newcommand{\actionSet}[0]{\mathcal{A}}
\newcommand{\reward}[0]{R}
\newcommand{\transition}[0]{T}
\newcommand{\horizon}[0]{\tau}
\newcommand{\policy}[0]{\mu}
\newcommand{\mdpSet}[0]{ \mathbf{M} }
\newcommand{\regret}[0]{\textsc{Regret}}
\newcommand{\evalSet}[0]{\edgeSet_{\text{eval}}}
\newcommand{\algName}[0]{\textsc{PSMP}\xspace}
\newcommand{\algFullName}[0]{Posterior Sampling for Motion Planning\xspace}
\newcommand{\lazysp}[0]{\textsc{LazySP}\xspace}
\newcommand{\bisect}[0]{\textsc{BiSECt}\xspace}
\newcommand{\direct}[0]{\textsc{DiRECt}\xspace}
\newcommand{\maxprob}[0]{\textsc{MaxProb}\xspace}
\newcommand{\frameworkFullName}[0]{Experienced Lazy Path Search\xspace}
\newcommand{\pathSelect}[0]{\texttt{ComputePath}\xspace}
\newcommand{\failFast}[0]{\texttt{FailFast}\xspace}
\newcommand{\psmpBound}[0]{$\tilde{O}(\sqrt{\stateSet \actionSet T})$\xspace}
\newcommand{\xxnote}[3]{}
  \renewcommand{\xxnote}[3]{\color{#2}{#1: #3}}
  \definecolor{SeaGreen}{HTML}{3FBC9D}
  \definecolor{Orange}{HTML}{D95F02}
  \definecolor{RacingGreen}{HTML}{004225}
\newcommand{\aref}[1]{Algorithm~\ref{#1}} % Algorithm
\newcommand{\sref}[1]{Section~\ref{#1}} % Section
\newcommand{\fref}[1]{Fig.~\ref{#1}} % Figure
\newcommand{\tabref}[1]{Table~\ref{#1}} % Table
\newcommand{\thmref}[1]{Theorem~\ref{#1}} % Theorem
\newcommand{\appref}[1]{Appendix~\ref{#1}} % Appendix
\newcommand{\extappref}[1]{\href{https://arxiv.org/abs/2002.11853}{Appendix~\ref*{#1}}} % Appendix
\renewcommand{\extappref}[1]{\appref{#1}} % Appendix
\newcommand{\fullFigGap}[0]{\vspace{-1.5\baselineskip}}
\newtheorem{theorem}{Theorem}
\title{\LARGE \bf
  Posterior Sampling for Anytime Motion Planning \\ on Graphs with Expensive-to-Evaluate Edges
}
\author{
  Brian Hou, Sanjiban Choudhury, Gilwoo Lee, Aditya Mandalika, and Siddhartha S. Srinivasa%
  \thanks{
    This work was (partially) funded by the National Institute of Health R01 (\#R01EB019335),
    National Science Foundation CPS (\#1544797), National Science Foundation NRI (\#1637748),
    the Office of Naval Research, the RCTA, Amazon, and Honda Research Institute USA.
    Brian Hou is partially supported by a NASA Space Technology Research Fellowship.
    Gilwoo Lee is partially supported by Kwanjeong Educational Foundation.}%
  \thanks{
    All authors are with the
    Paul G. Allen School of Computer Science \& Engineering,
    University of Washington, Seattle, WA 98195
    \texttt{\{bhou, sanjibac, gilwoo, adityavk, siddh\}@cs.uw.edu}}%
}
\begin{document}

\maketitle

% !TEX root = ../root.tex

\begin{abstract}
Collision checking is a computational bottleneck in motion planning, requiring lazy algorithms that explicitly reason about when to perform this computation.
Optimism in the face of collision uncertainty minimizes the number of checks before finding the shortest path.
However, this may take a prohibitively long time to compute, with no other feasible paths discovered during this period.
For many real-time applications, we instead demand strong anytime performance, defined as minimizing the cumulative lengths of the feasible paths yielded over time.
We introduce \algFullName (\algName), an anytime lazy motion planning algorithm that leverages learned posteriors on edge collisions to quickly discover an initial feasible path and progressively yield shorter paths.
\algName obtains an expected regret bound of \psmpBound and outperforms comparative baselines on a set of 2D and 7D planning problems. 
\end{abstract}

% !TEX root = ../root.tex

\section{Introduction}
\label{sec:introduction}

We formalize the problem of anytime motion planning.
Existing algorithms typically make \emph{asymptotic} guarantees~\citep{karaman2011sampling} that they will eventually find the optimal path.
However, such analysis leaves several practical questions unanswered.
Given a budget of computation time, how sub-optimal will the resulting path be?
How will increasing the computation budget improve the quality of the solution?
Formalizing these questions helps us better understand important \emph{anytime} properties, not just asymptotic properties.
This will also enable practitioners to make more informed choices about the algorithms they deploy.

We focus on anytime planning on fixed graphs.\footnotemark{}
Here, vertices are sampled robot configurations and edges are potential robot motions.
Evaluating if an edge is in collision is \emph{computationally expensive}~\citep{hauser2015lazy}.
Hence, search algorithms must be lazy~\citep{dellin2016lazysp}, i.e., minimize edge evaluation as they search for paths.
Our goal is to quickly find feasible paths and shorten them as time permits---we refer to this as lazy anytime search~\citep{likhachev2004ara}.

What if such an algorithm was provided a posterior distribution of edge collisions?  This could either be based on a dataset of prior experience or domain knowledge about obstacle geometries. The search must consider two factors: the length of a path and the likelihood of it being in collision.
A desirable outcome, shown in \fref{fig:teaser}, is to initially evaluate longer paths that have lower probability of collision.
Eventually, as uncertainty collapses, the search evaluates shorter and shorter paths.
This strategy encapsulates a fundamental trade-off: it can either explore shorter paths to potentially improve future performance or exploit the most likely path to attain better immediate performance.

We formalize this within the framework of Bayesian Reinforcement Learning (BRL).
We first define lazy search on a graph as solving a deterministic, goal-directed, Markov Decision Process (MDP) where rewards (collision status) are unknown.
A BRL algorithm explores this MDP as it attempts to find the optimal policy (path).
To judge how quickly an algorithm learns, we consider the bandit setting~\citep{bubeck2012regret}:
in each round of learning, an agent pulls an arm (evaluates a path), receives a loss (negative of path length), and accumulates regret with respect to the optimal arm.
A low expected regret~\citep{hannan1957approximation} corresponds to evaluating edges that not only lead to shorter paths, but also drive down uncertainty over time.
Hence, our key insight is:
\begin{quote}
Good anytime search performance is equivalent to minimizing \emph{Bayesian regret}.
\end{quote}

However, the space of paths is combinatorially large, which makes many bandit algorithms that require explicit posteriors inapplicable.
Fortunately, while explicitly computing this posterior is hard, sampling from it is quite easy!
Posterior sampling offers strong guarantees on Bayesian regret~\citep{osband2013more}.
Our algorithm, \algFullName (\algName), samples a graph from the posterior and only evaluates edges along the shortest path in that graph.
It is both simple to implement and---given a posterior to sample from---free of tuning parameters. We make the following contributions.

\begin{itemize}
\item We introduce a novel formulation of anytime search on graphs as an instance of Bayesian Reinforcement Learning (\sref{sec:problem-statement}).
\item We introduce a general framework, \frameworkFullName, that unifies several existing search algorithms that leverage prior experience (\sref{sec:method}).
\item We show that \algName has good theoretical anytime performance by bounding its Bayesian regret (\sref{sec:method}).
\item We demonstrate that \algName effectively leverages posteriors to outperform comparative baselines on a set of 2D and 7D motion planning problems (\sref{sec:results}).
\end{itemize}

% Tied to the \footnotemark above, but here so that the footnote is on the right column.
\footnotetext{
  While analyzing anytime algorithms that continue to sample the configuration space is the eventual goal, obtaining meaningful bounds requires analyzing the nature of probability distributions over continuous configuration space geometry.
  This becomes quite challenging, even for simple geometries, and hence is currently out of scope.
}

\begin{figure*}[!t]
  \centering
  \includegraphics[width=0.69\linewidth]{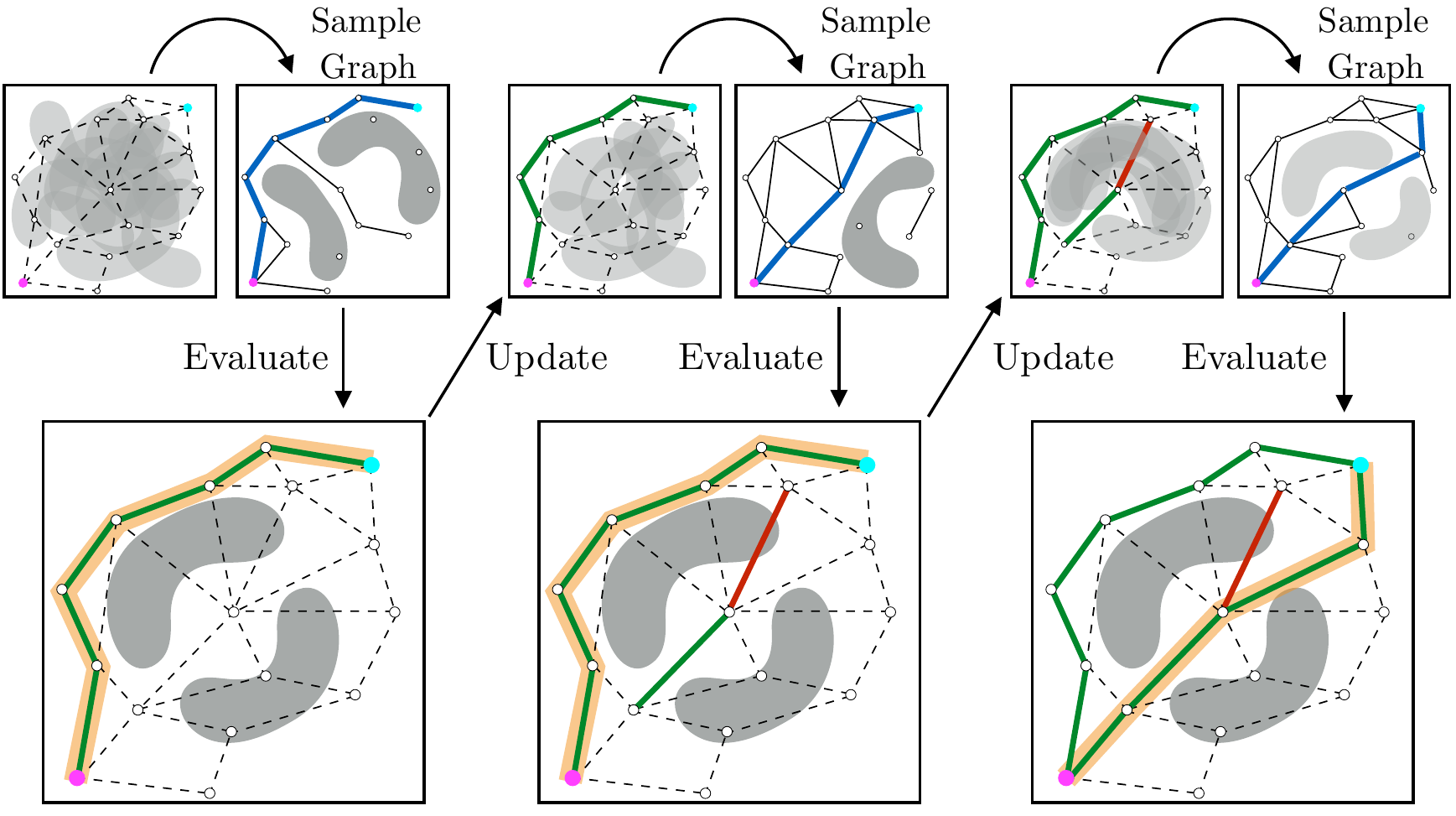}
  \caption{
    \algFullName is a Bayesian anytime motion planning algorithm.
    The graph's edge collision statuses are initially unknown (dashed).
    In each iteration of \algName, the posterior (top left) is sampled to produce a graph (top right).
    The sampled graph's shortest path (blue) is evaluated for collisions against the real world (bottom).
    Edges are either found to be in collision (red) or collision-free (green);
    those statuses are used to update the posterior.
    If all edges in the proposed path are collision-free, \algName updates its current shortest path (yellow), which can be emitted at any time.
    \fullFigGap
  }
  \label{fig:teaser}
\end{figure*}

% !TEX root = ../root.tex

\section{Related Work}
\label{sec:related-work}

\subsection{Priors in Lazy Search}

Planning with expensive collision-checking is a well-studied problem in motion planning.
Lazy search approaches deal with this by only checking edges that lie along the estimated shortest path~\citep{bohlin2000path,dellin2016lazysp} or the shortest subpath~\citep{cohen2015planning,mandalika2018lrastar}.
For real-world robotics problems, leveraging priors on edge collisions can produce significant speed-ups.
FuzzyPRM~\citep{nielsen2000two} evaluates paths that are most likely to be feasible.
GLS~\citep{mandalika2019gls} uses priors to quickly invalidate subpaths until the shortest path is found.
STROLL~\citep{bhardwaj2019stroll} learns an edge evaluation policy for LazySP.
\bisect~\cite{choudhury2017active} and \direct~\cite{choudhury2018bayesian} formalize Bayesian motion planning and compute near Bayes-optimal policies for finding feasible paths.
However, these approaches do not aim for anytime performance.

Several methods model collision posteriors to exploiting structure in planning.
One approach is to predict validity of unevaluated edges given the outcomes of evaluated edges~\cite{esposito2016matrix}.
Other approaches try to model the configuration space belief given observed collisions and guide search with that belief~\citep{burns2005sampling,pan2012ibl,huh2016learning,lacevic2016burs}.
However, these approaches do not directly aim to approximate the Bayesian posterior.

\subsection{Anytime Planning}

For many real-time planning applications, an algorithm must be able to deal with an unknown planning time budget.
This is achieved by incremental sampling methods, such as RRT*~\citep{karaman2011sampling} or RRT++~\citep{abbasi2010extending}, which guarantee asymptotic optimality.
However, they make no promises on convergence rate and are often slow in practice.
Incremental densification techniques, on the other hand, offer provable speed-ups by restricting new samples to a region that can only improve the current solution~\citep{ferguson2006anytime,gammell2014informed,gammell2015batch}. However, these methods cannot provably exploit priors on the configuration space.

Another way of viewing anytime planning is through the lens of heuristic search on large graphs.
Weighted A* search with an inflated heuristic finds feasible paths quickly, although the solution may be suboptimal.
Anytime variants of A*~\citep{likhachev2004ara,van2011anytime} efficiently run a succession of weighted A* searches with decreasing inflation.
However, heuristics may not always indicate existence of feasible paths.
POMP~\citep{choudhury2016pomp} uses priors on edge validity to explicitly trade-off path likelihood and path length.
AEE*~\citep{narayanan2017heuristic} uses Bernoulli priors on edges to generate a set of plausible shortest paths, which is then evaluated in an anytime fashion.
However, these do not offer guarantees for arbitrary priors.

\subsection{Bayesian Reinforcement Learning (BRL)}

Standard RL approaches consider optimal exploration of an unknown MDP until an optimal policy is computed.
In the absence of prior knowledge, PAC-MDP~\citep{strehl2009reinforcement} approaches result in exhaustive experimentation in every possible state.
BRL~\citep{ghavamzadeh2015bayesian} introduces a prior on rewards and transitions, requiring only enough exploration to find a good policy in expectation.
Since Bayes-optimality is intractable, this can only be solved approximately~\citep{kolter2009near,chen2016pomdp}.
An alternative to Bayes-optimality is Bayesian regret, which views the learning as an online process of interacting with MDP. 
This leads to simpler algorithms such as UCRL2~\citep{jaksch2010near} and Posterior Sampling RL~\citep{osband2013more}.
We build on \citep{osband2013more} to bound Bayesian regret for the problem of anytime planning.

% !TEX root=../root.tex

\section{Bayesian Anytime Motion Planning}
\label{sec:problem-statement}

We assume a fixed explicit graph $\graph = \set{\vertexSet, \edgeSet}$,  where $\vertexSet$ denotes a set of vertices and $\edgeSet$ a set of edges.

Given start and goal $\pair{\start}{\goal} \in \vertexSet$, a path $\path$ is a sequence of connected vertices $\seq{\vertex}{l}$, $\vertex_1 = \start, \vertex_l = \goal$.
Let $\weight: \edgeSet \rightarrow \real^{+}$ be the weight of an edge.
The length of a path is the sum of edge weights, i.e. $\weight(\path) = \sum_{\edge \in \path} \weight(\edge)$.
We define a world $\world \in {\real^{+}}^{|\edgeSet|}$ as the vector of edge weights.
The weights are unknown and discovered by edge evaluation, which is \emph{computationally expensive}.
Hence, an algorithm's planning time is determined by the edges that it evaluates.

As a planning algorithm evaluates edges $\edge_1, \cdots, \edge_N$, it uncovers a series of progressively shorter paths $\path_1, \cdots, \path_N$.
The objective of anytime planning is to minimize the cumulative length of paths, i.e., $\sum_{i=1}^{N} \weight(\path_i)$.

We consider a Bayesian setting where we have a prior distribution on worlds $P(\world)$ obtained from past experience.
As an algorithm evaluates edges, let $\history_t = \{\weight(\edge_1), \cdots, \weight(\edge_t)\}$ be the history of observations, i.e., outcome of edge evaluations.
Given this history, a Bayesian planning algorithm can compute a posterior $P(\world | \history_t)$ to decide which path to evaluate.
The objective of a \emph{Bayesian anytime planning} algorithm is to minimize the expected cumulative length of paths computed given the prior over the worlds $P(\world)$.

We will now establish an equivalence between our problem and the repeated episodic BRL problem described in \cite{osband2013more}.
We consider a deterministic finite horizon MDP $\mdp = \langle \stateSet, \actionSet, \reward^\mdp, \transition, \horizon, s_1 \rangle$.
A state $\state \in \stateSet$ corresponds to a vertex $\vertex \in \vertexSet$, actions $\actionSet(\state)$ correspond to the set of adjacent edges, and transition function $\state' = T(\state, \action)$ is the adjacency matrix of the graph.
The reward function $\reward^\mdp(\state, \action)$ is $0$ if $\state = \goal$, else it is $-\weight(\edge)$, where $\edge$ is the edge associated with $\action$.
The horizon $\horizon$ corresponds to the maximum number of edges in a path.
The initial state $\state_1$ is $\start$.

The solution to the MDP is a partial policy $\policy : \stateSet \rightarrow \actionSet$ corresponding to a path $\path$.
The policy's value is $V^{\mdp}_{\policy}(\state_1) = \sum_{j=1}^{\horizon} \reward^\mdp (\state_j, \policy(\state_j))$ is the negative path length $-\weight(\path)$.

For an unknown MDP $\mdp^*$, the reward function $\reward^{\mdp^*}$ is unknown. The prior over worlds $P(\world)$ maps to a prior $P(\reward^\mdp)$.
A learning algorithm must infer the reward function by repeatedly interacting with $\mdp^*$.
In each episode $i = 1, \cdots, m$, it executes policies $\policy_1, \cdots, \policy_m$, updates history $\history_i$, and tracks the best discovered policy $\hat{\policy}_i = \max_{j = 1, \dots, i} V^{\mdp^*}_{\policy_j}(\state_1)$.
We define an algorithm's regret to be the cumulative difference between the value of the optimal policy and the best discovered policy after each episode
\begin{equation*}
\textstyle
\regret(m) = \sum_{k=1}^m \Delta_k, \text{where } \Delta_k = V^{\mdp^*}_{\policy^*}(\state_1) - V^{\mdp^*}_{\hat{\policy}_k}(\state_1).
\end{equation*}
The objective of the BRL problem in \citep{osband2013more} is to minimize the expected regret $\expect{P(\reward^\mdp)}{\regret(m)}$, also known as the Bayesian regret $\expect{}{\regret(m)}$.
Note that this is a constant offset from the objective we defined in Bayesian anytime planning, i.e., $\expect{P(\reward^\mdp)}{\regret(m)} = \expect{P(\world)}{  \sum_{k=1}^m \weight(\path_k) - \weight(\path^*)}$. 

Although we formulate the problem more generally, we focus on a specific instantiation where each edge has a binary collision status.
Edge evaluation corresponds to collision checking the edge.
If an edge is not in collision, the weight $\weight(\edge)$ is the distance between the two vertices, which is known.
If the edge is in collision, $\weight(\edge)$ is set to a large value $\weightMax$.
For compactness, we redefine a world to be the collision status of all edges $\world \in \{0, 1\}^{|\edgeSet|}$, effectively binarizing the problem.
A feasible path has $\world(\edge) = 1, \forall \edge \in \path$.

The MDP we have defined allows us to establish equivalences between RL and other lazy motion planning formulations proposed in previous work.
The lazy \emph{shortest path problem}~\citep{dellin2016lazysp}, where the shortest feasible path must be found while eliminating all shorter paths, is equivalent to the PAC-MDP~\citep{strehl2009reinforcement} problem of optimally exploring an MDP until an optimal policy is found.
Similarly, the Bayesian version of this problem is equivalent to PAC-BAMDP~\citep{ghavamzadeh2015bayesian}.
The \emph{feasible path problem}~\citep{choudhury2017active} is equivalent to Bayes-optimally exploring the MDP until a valid policy is found.

% !TEX root = ../root.tex

\section{Experienced Lazy Path Search}
\label{sec:method}

We present a general framework for experienced lazy search that uses priors on edge validities to minimize collision checking. 
This unifies search for anytime planning, as well as other objectives such as efficiently finding the shortest path or any feasible path. 
We then introduce \emph{\algFullName} (\algName), a new algorithm that bounds expected anytime planning performance.

\subsection{\frameworkFullName}
\label{subsec:method-framework}

We begin by presenting a framework for lazy search algorithms that uses priors, thus unifying several previous works in this area~\citep{dellin2016lazysp,choudhury2016pomp,choudhury2017active,bhardwaj2019stroll}.
In \frameworkFullName (\aref{algo:framework}), a \emph{proposer} lazily computes a path from the start to goal (without any edge evaluation) and a \emph{path validator} chooses edges along the path to evaluate.\footnote{Note this unifying framework differs from the framework in Generalized Lazy Search (GLS)~\citep{mandalika2019gls}. First, GLS looks at problems where planning time depends on \emph{both} graph operations and edge evaluations. Hence, it argues for {\it interleaving} search with evaluation of \emph{sub-paths}. Second, GLS exclusively considers the shortest path problem.}

\begin{table*}[!t]
\centering
\small
\caption{Different algorithms as instantiations of \frameworkFullName.}
\begin{tabulary}{\textwidth}{LCC}
\toprule
{\bf Algorithm}	&  $\pathSelect(G, P(\world | \history))$ & {\bf Performance Guarantee} \\ \midrule
PSMP ({\bf ours})                     & Sample a world $\world \sim P(\world | \history)$, return $\path^*$                           & Anytime (Bayesian regret)     \\
\lazysp~\citep{dellin2016lazysp}      & Generate optimistic world $\world$, return $\path^*$                                          & Shortest path (OFU)           \\
\maxprob~\citep{choudhury2017active}  & Set weights $- \log P(\world(\edge)=1 | \history)$, return $\path^*$                          & Feasible path (Bayes-optimal) \\ 
POMP~\citep{choudhury2016pomp}        & Set weights $\alpha w(e) - (1 - \alpha) \log P(\world(\edge)=1 | \history)$, return $\path^*$ & Anytime (Pareto optimality)   \\
\bottomrule
\end{tabulary}
\label{tab:algorithm}
\vspace{-1\baselineskip}
\end{table*}

\begin{algorithm}[!t]
\caption{\frameworkFullName}
\label{algo:framework}
\begin{algorithmic}[1]
\Require Graph $\graph$, Prior $P(\world)$, Proposer $\pathSelect(\cdot)$
\Statex
\State Initialize history $\history \gets \emptyset$, evaluated edges $\evalSet \gets \emptyset$
\While{termination criteria not met}
  \State Invoke proposer $\path = \pathSelect(G, P(\world | \history))$.
  \While{path $\path$ is not invalid \textbf{and} $\path$ is unevaluated}
    \State Evaluate unevaluated edge with highest \\
    \hskip\algorithmicindent\hskip\algorithmicindent posterior collision probability \\
    \hskip\algorithmicindent\hskip\algorithmicindent $\edge^* = \argmin_{\edge \in \path \setminus \evalSet} P(\world(\edge)=1 | \history)$.
    \State Add edge to evaluated set $\evalSet \gets \evalSet \cup \{\edge^*\}$.
  \EndWhile
  \State Update history $\history$ with outcomes.
  \State \algorithmicif\ $\path$ is valid\ \algorithmicthen\ emit $\path$.
\EndWhile
\end{algorithmic}
\end{algorithm}

We fix the path validator to the \failFast rule~\citep{mandalika2019gls} for all proposers.
This rule tries to invalidate a proposed path as quickly as possible, formally stated as follows:
\begin{theorem}
\label{thm:fail_fast}
The \failFast validator repeatedly evaluates the edge with highest probability of collision, until one edge is found to be in collision or all edges are found to be collision-free.
This is optimal for eliminating a single candidate path\footnote{However, it ignores overlap among paths unlike \cite{choudhury2018bayesian} for simplicity.}, if prior $P(\phi)$ is independent Bernoulli.
For general priors, this is near-optimal with a factor of 4.
\end{theorem}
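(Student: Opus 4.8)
The statement bundles two claims: exact optimality of \failFast under an independent Bernoulli prior, and a factor-$4$ guarantee for arbitrary priors. The unifying view I would adopt is that resolving a single path's feasibility is the sequential evaluation of the conjunction $\bigwedge_{\edge \in \path} \world(\edge)$: a correct validator must keep checking path edges until it either observes a collision (path infeasible) or has confirmed every edge collision-free (path feasible). Writing $q_\edge = P(\world(\edge)=0 \mid \history)$ for the posterior collision probability and $p_\edge = 1 - q_\edge$, the cost of a policy is the expected number of edge evaluations before this binary status is resolved, and the goal is to show \failFast (which evaluates the maximizer of $q_\edge$ first) minimizes this cost up to the stated factor.

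For the Bernoulli case I would first note that under an independent prior the posteriors of the unevaluated edges never change, so the optimal policy is a fixed order and adaptivity gains nothing. For a fixed order $\edge_1, \dots, \edge_L$ the expected number of evaluations is $\sum_{k=1}^{L} \prod_{j<k} p_{\edge_j}$, since the $k$-th edge is reached exactly when the first $k-1$ are collision-free. The core step is an adjacent-transposition (exchange) argument: holding the prefix fixed and swapping $\edge_i$ with $\edge_{i+1}$ changes the cost by a quantity proportional to $q_{\edge_{i+1}} - q_{\edge_i}$, so the sum is minimized precisely when the edges are ordered by decreasing $q_\edge$. This is exactly the \failFast order, giving optimality; I would also remark that the weighted-cost version orders by $c_\edge / q_\edge$, which reduces to \failFast when all evaluation costs are equal.

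For arbitrary priors the myopic rule is no longer exactly optimal, because each observation shifts the posterior on the remaining edges, so I would compare \failFast against the optimal \emph{adaptive} policy. The reduction I would exploit is that the two outcomes partition worlds into a feasible set and an infeasible set; absent deterministic implications every correct policy evaluates all $L$ edges on feasible worlds, so policies differ chiefly in how quickly they locate a collision on infeasible worlds. I would frame this residual task as an adaptive stochastic minimum-cost cover whose coverage goal is ``a collision has been observed,'' for which \failFast is exactly the greedy policy, and then obtain the constant via a step-by-step charging argument against the optimum; the fact that a single path admits only a two-way feasible/infeasible decision is what collapses the usual logarithmic adaptive-cover factor down to the constant $4$, and the correlation-handling machinery is precisely the overlap-aware analysis of \cite{choudhury2018bayesian} deferred to in the footnote. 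The main obstacle is exactly this second part: under correlated priors the greedy choice can be misled by posterior shifts, so the crux is the amortization that bounds \failFast's evaluations on infeasible worlds against the optimal policy's while carefully tracking constants to land on $4$ rather than a looser or logarithmic factor.
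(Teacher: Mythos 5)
Your Bernoulli half is correct and is actually more self-contained than the paper's own argument: the paper simply maps the problem to Bayesian search for an item (an invalid edge) in a set of boxes (the unevaluated edges) and cites \citep{dor1998optimal} for both the exact-optimality and the factor-$4$ claims, whereas you derive the independent case directly. Your reduction to minimizing $\sum_{k=1}^{L}\prod_{j<k} p_{\edge_j}$ over orderings, the observation that independence makes adaptivity useless so a fixed order suffices, and the adjacent-transposition computation (the swap changes the cost by $\prod_{j<i} p_{\edge_j}\,(q_{\edge_{i+1}}-q_{\edge_i})$) are all sound and yield exactly the decreasing-collision-probability order of \failFast. This is a legitimate alternative route for that half of the theorem, with the advantage of not outsourcing the easy case to a citation.

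The factor-$4$ half is where the genuine gap is. You correctly identify that the objective is an expected stopping time and that \failFast is the greedy policy for locating a collision, but you never actually produce the charging argument, and the intuition you offer for the constant --- that the two-way feasible/infeasible decision ``collapses the usual logarithmic adaptive-cover factor down to $4$'' --- points at the wrong mechanism. The logarithmic factor belongs to min-\emph{cost} cover (cover everything, minimize total cost); the constant $4$ belongs to min-\emph{sum} objectives such as min-sum set cover and the Bayesian box-search problem, where the quantity minimized is a probability-weighted sum of discovery times --- exactly the expected number of evaluations here. That is the reduction the paper actually uses: eliminating a path is searching boxes for an item under an arbitrary joint prior, and the $4$-approximation of the greedy ``most likely box'' rule is imported wholesale from \citep{dor1998optimal} (the same histogram/charging analysis as Feige--Lov\'{a}sz--Tetali for min-sum set cover). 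Without either carrying out that amortization or invoking such a result, your second claim is asserted rather than proved; as written, nothing rules out a larger constant or a logarithmic loss. A smaller side issue: under correlated priors your premise that every correct policy evaluates all $L$ edges on feasible worlds can fail when the posterior deterministically implies remaining edge statuses (e.g.\ with the paper's finite-set posterior), so the clean feasible/infeasible decomposition needs more care.
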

\begin{proof}
This can be mapped to a Bayesian search problem where the goal is to sequentially search for an item (invalid edge) in a set of boxes (unevaluated edges) while minimizing cost of search. Bounds follow from \citep{dor1998optimal}. 
\end{proof}

By varying the proposer $\pathSelect(\cdot)$, we can recover several algorithms from the literature that aim for different performance guarantees. 
All proposers listed in \tabref{tab:algorithm} view $G$ with modified edge weights and propose the shortest path on the modified graph.
\lazysp~\citep{dellin2016lazysp} returns the optimistic shortest path by modifying all unevaluated edges in $G$ to be feasible.
It eliminates candidate paths in order of increasing length and terminates after a feasible path is found, yielding an OFU-like guarantee of finding the shortest path with minimal evaluations.
\maxprob~\citep{choudhury2017active} returns the most likely feasible path by modifying the weights to be negative log likelihood of validity. 
It sequentially evaluates the most probable path, terminating after a feasible path is found.
Similar to \thmref{thm:fail_fast}, \maxprob Bayes-optimally proposes the fewest paths before finding a feasible path if prior $P(\phi)$ is independent Bernoulli and near Bayes-optimally (factor of $4$) otherwise.
Finally, POMP~\citep{choudhury2016pomp} balances edge weight with the likelihood of being collision free.
Increasing $\alpha$ between iterations of \aref{algo:framework} traces out the Pareto frontier of the two objectives, starting with the most probable path while guaranteeing asymptotic optimality. However, this anytime property comes without guarantees on rate of improvement.

\begin{figure}[!t]
\centering
\frame{\includegraphics[width=0.15\linewidth]{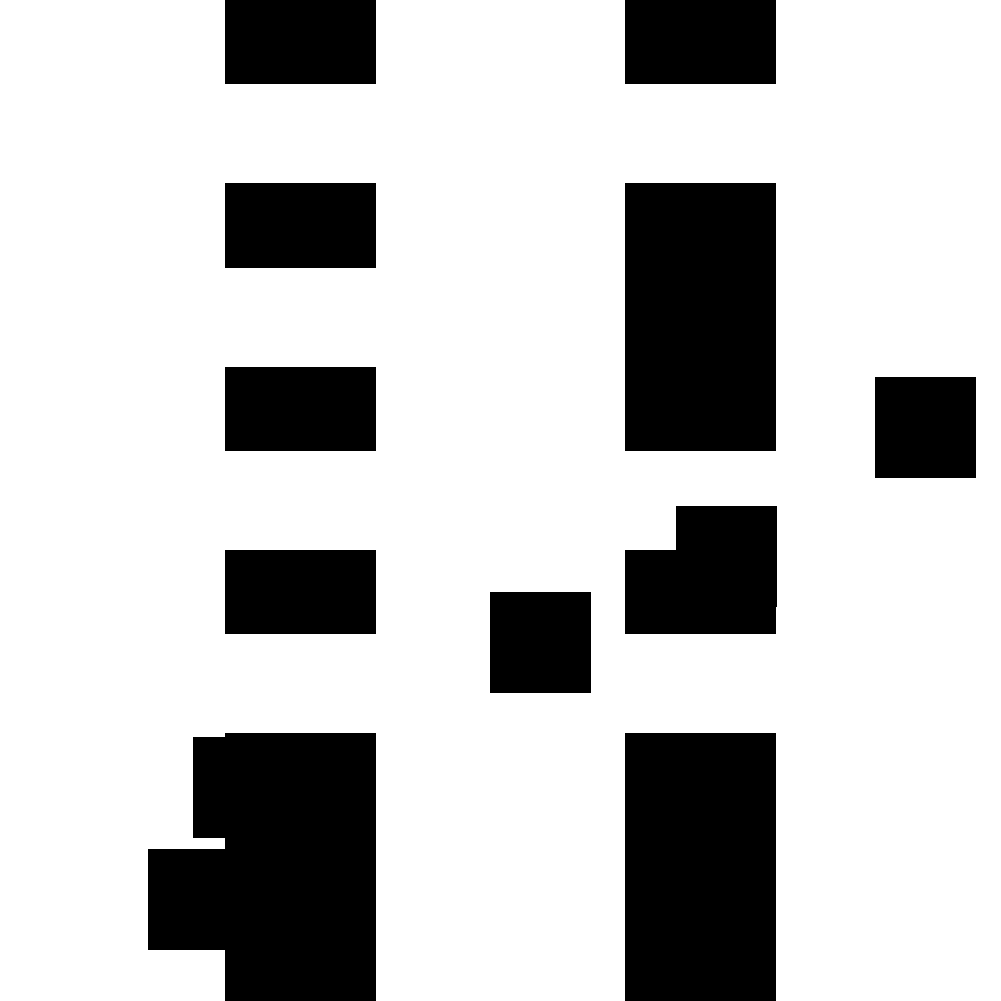}}
\frame{\includegraphics[width=0.15\linewidth]{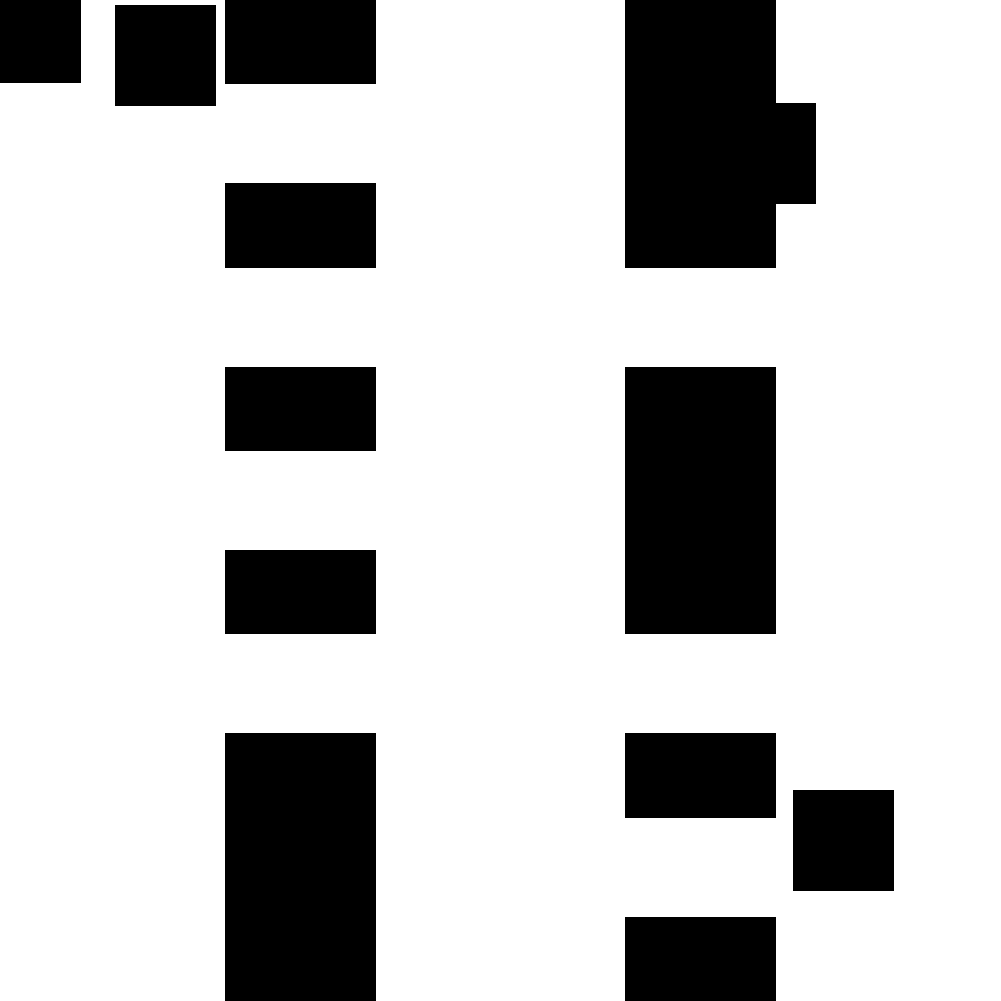}}
\frame{\includegraphics[width=0.15\linewidth]{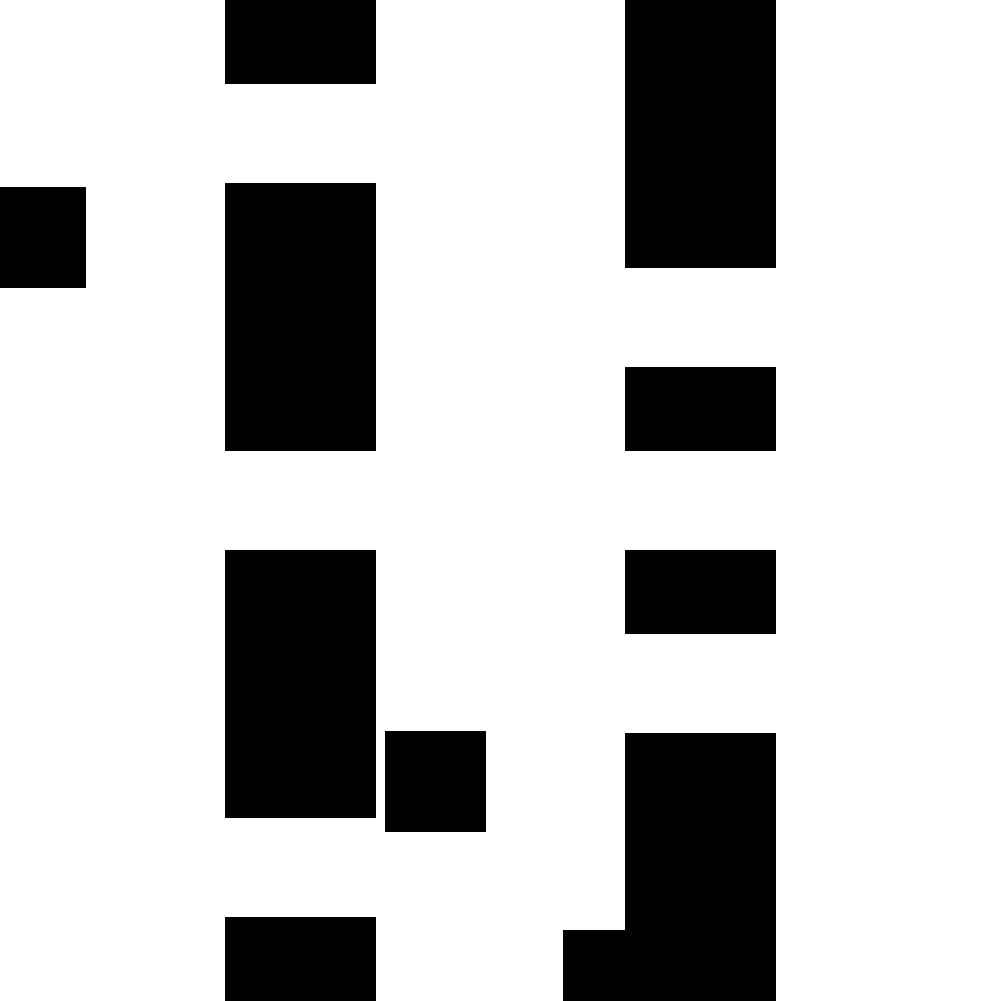}}
\hfill
\frame{\includegraphics[width=0.15\linewidth]{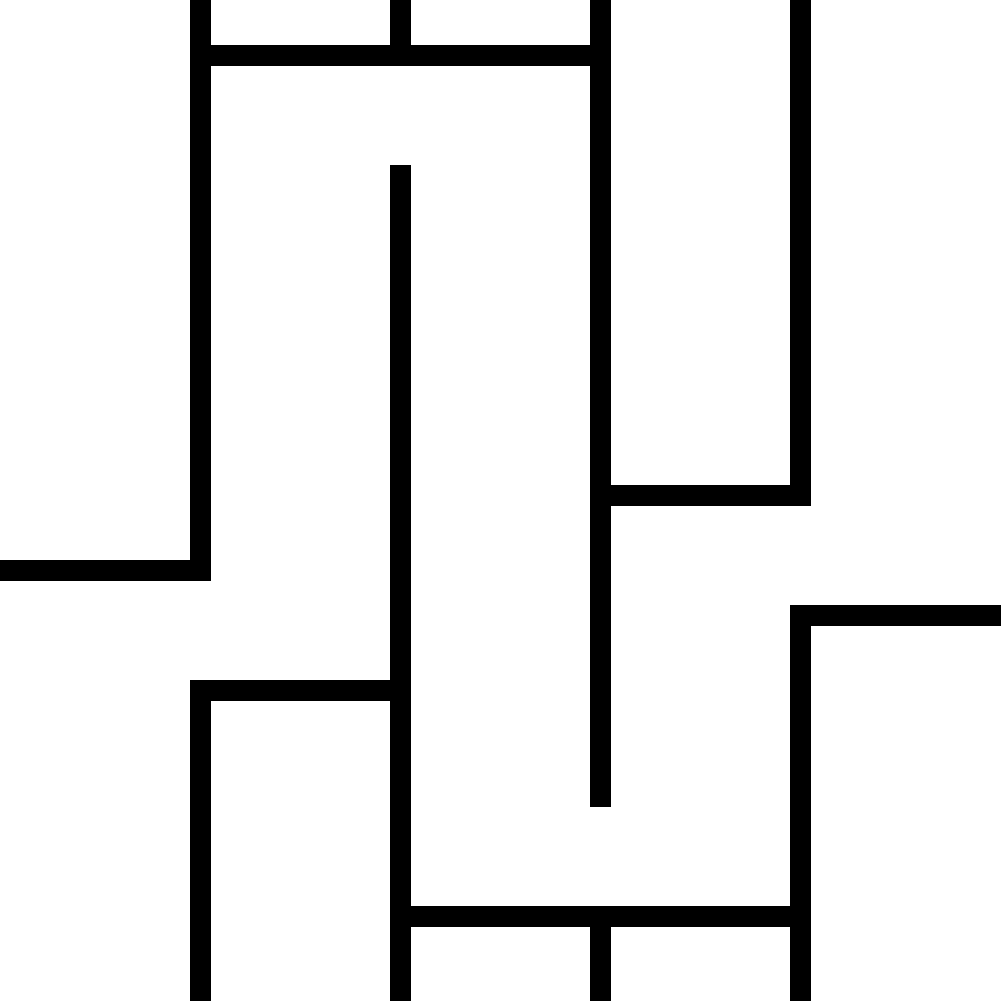}}
\frame{\includegraphics[width=0.15\linewidth]{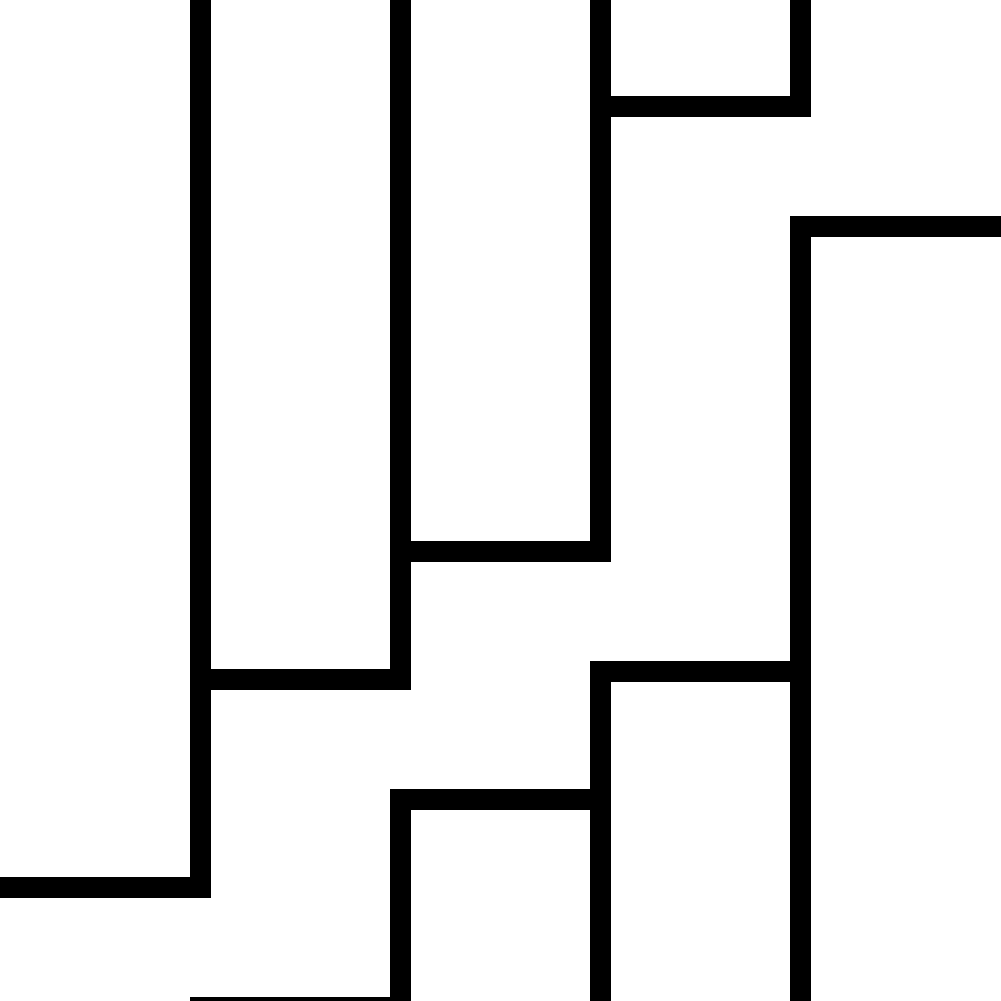}}
\frame{\includegraphics[width=0.15\linewidth]{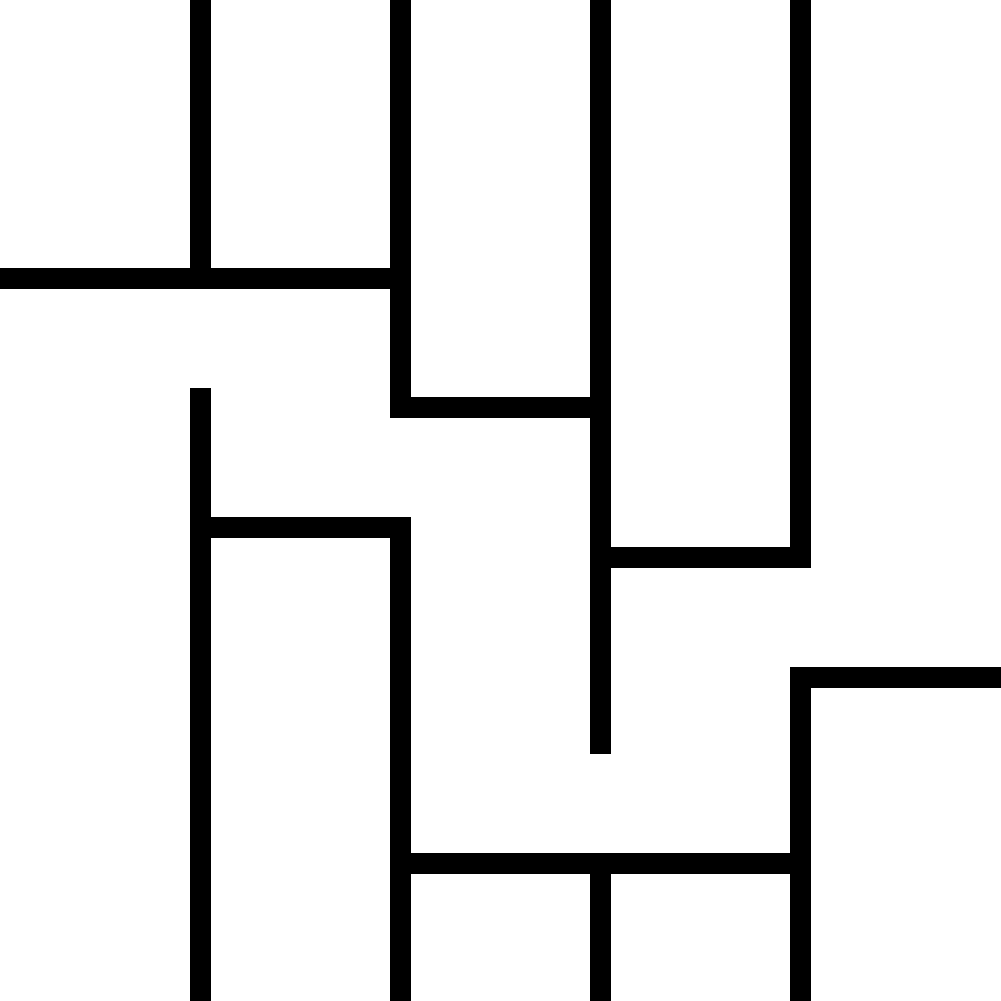}}

\medskip

\includegraphics[trim={80 0 20 0},clip,width=0.3\linewidth]{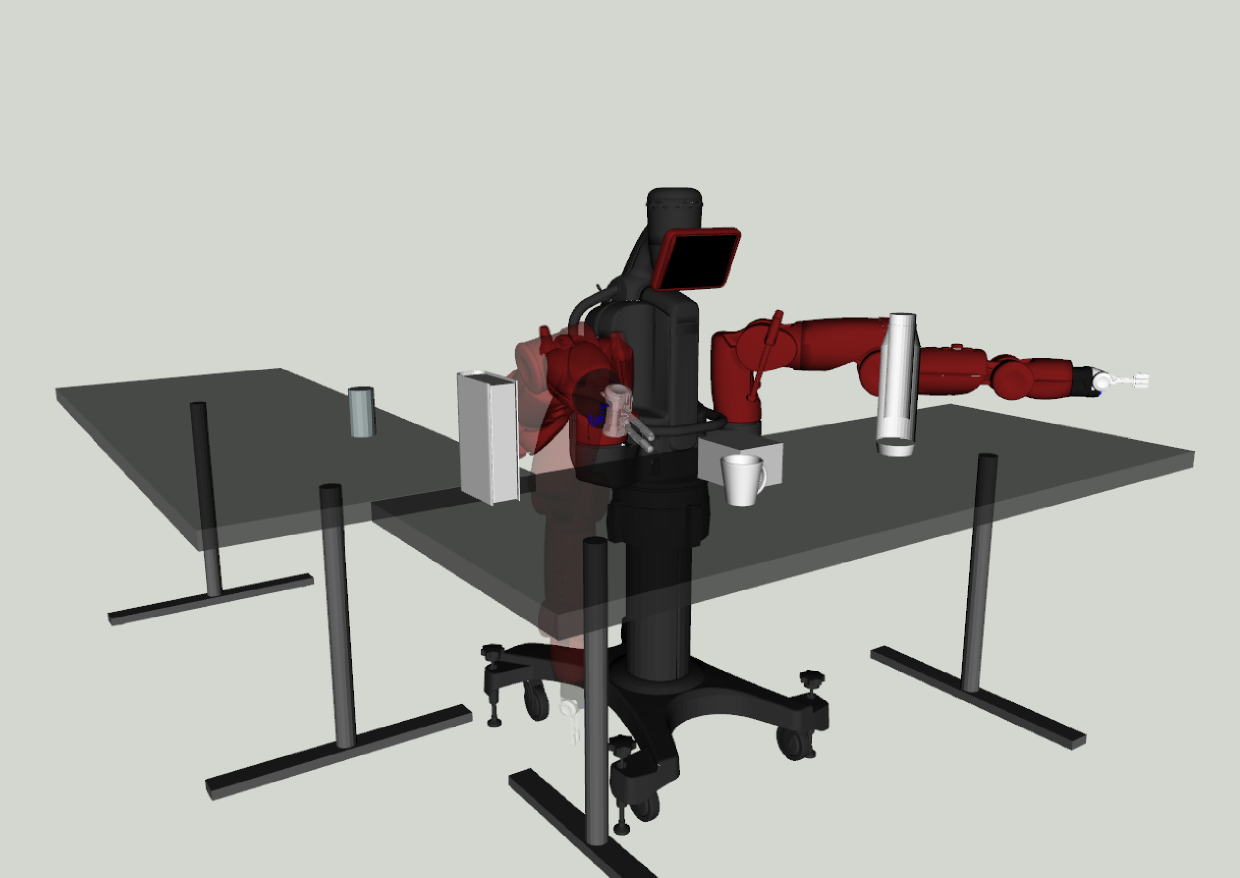}
\hfill
\includegraphics[trim={80 0 20 0},clip,width=0.3\linewidth]{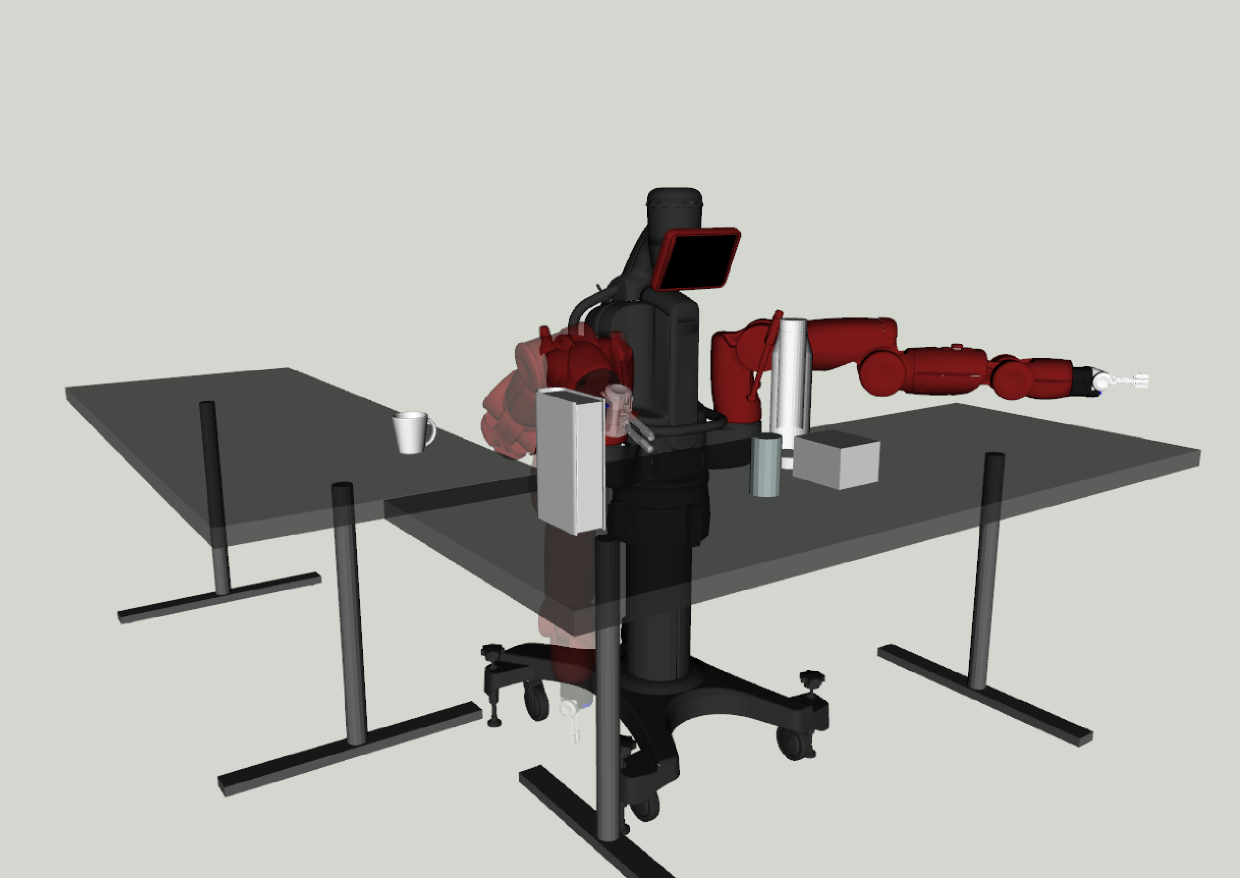}
\hfill
\includegraphics[trim={80 0 20 0},clip,width=0.3\linewidth]{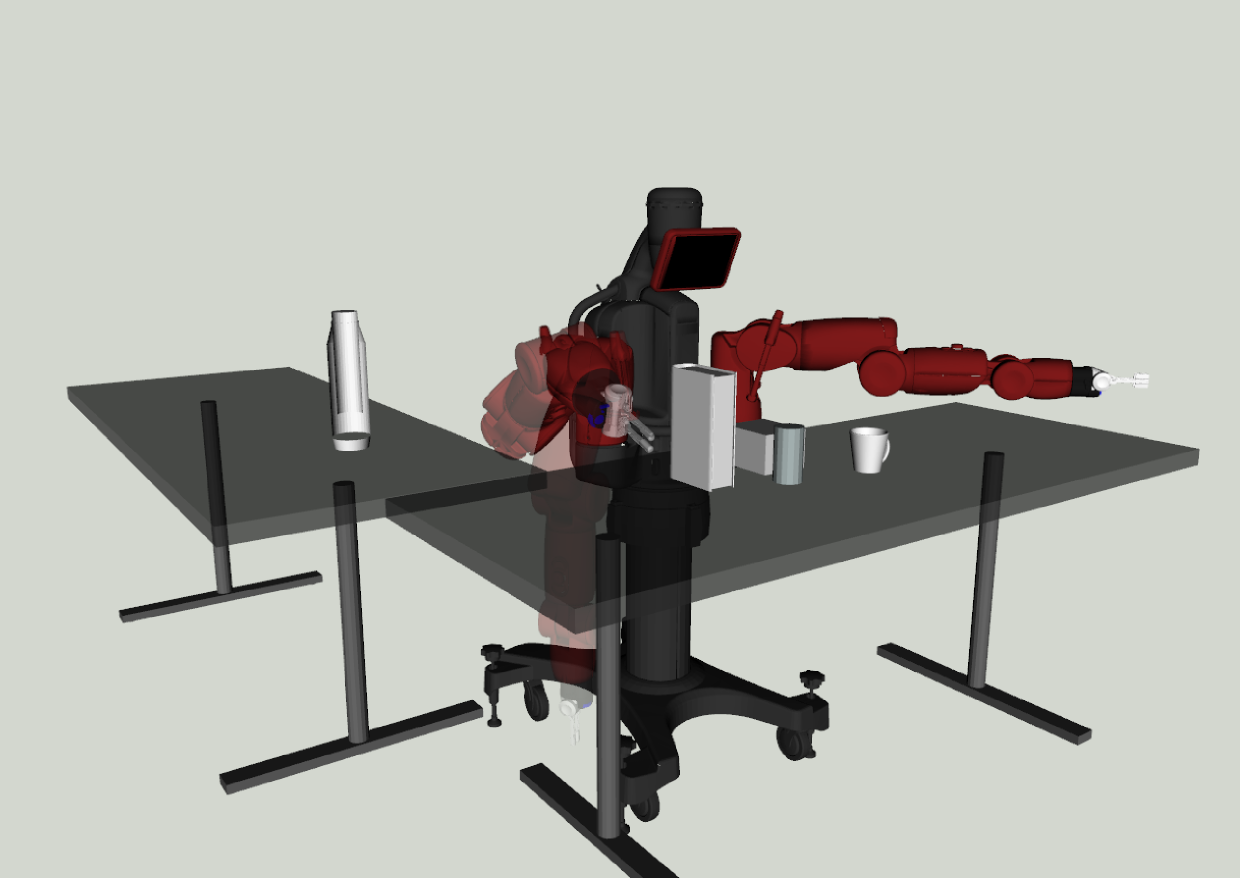}

\caption{
  Example problems from two 2-DOF datasets (top) and the 7-DOF dataset (bottom).
  In the 2-DOF datasets, a point robot navigates from the bottom left corner to the top right.
  In the 7-DOF dataset, the robot arm moves from below the table (transparent) to above, while avoiding clutter.
}
\label{fig:dataset}
\vspace{-1\baselineskip}
\end{figure}

\subsection{Posterior Sampling for Motion Planning (PSMP)}
\label{subsec:method-psmp}

PSMP aims to guarantee anytime behavior. It essentially borrows the idea of posterior sampling~\citep{osband2013more}, or Thompson sampling~\citep{thompson1933likelihood}, and applies it in the space of paths. PSMP samples a world $\phi$ from the posterior distribution $P(\phi | \psi)$ conditioned on the history $\psi$. 
It then computes the shortest path $\path^*$ on $\phi$ and proposes it for evaluation. 

We will now establish Bayesian regret bounds for PSMP, following the analysis of posterior sampling for reinforcement learning~\citep{osband2013more} and multi-armed bandits~\citep{russo2014learning}.
From an algorithmic perspective, PSMP is attractive because it requires solving only a single shortest path problem.
By contrast, other Bayesian search methods like Monte Carlo Tree Search~\citep{guez2012efficient} or even heuristics like QMDP~\citep{Littman95learningpolicies} require several calls to the search. PSMP also requires no tuning parameters.

By sampling paths according to the posterior probability they are optimal, PSMP continues to sample plausible shortest paths.
As PSMP gains more information, the posterior concentrates around the true world.
The regret for PSMP grows sublinearly as \psmpBound where $T$ is the total number of timesteps, matching the lower bound from~\citep{jaksch2010near}.
For the analysis, we assume edge weights are normalized $[0, 1]$.
\begin{theorem}
\label{thm:regret}
The expected regret is bounded as
\begin{equation}
\expect{}{\regret(T)} = O(\horizon \sqrt{\stateSet \actionSet T \log(\stateSet \actionSet T) })	
\end{equation}
\end{theorem}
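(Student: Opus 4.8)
\emph{Proof sketch (proposed approach).} The plan is to recognize PSMP as posterior-sampling reinforcement learning (PSRL) applied to the finite-horizon MDP $\mdp^*$ of \sref{sec:problem-statement}, and to import the Bayesian-regret analysis of \citep{osband2013more,russo2014learning}, specializing it to our setting. The decisive structural feature is that in $\mdp^*$ the transition function $\transition$ (the graph adjacency) is \emph{known and deterministic}; only the reward $\reward^{\mdp^*}$, i.e.\ the vector of edge collision statuses $\world^*$, is unknown. Generic PSRL guarantees carry a factor $\stateSet$ (rather than $\sqrt{\stateSet}$) in their state-space dependence, and that extra $\sqrt{\stateSet}$ stems entirely from the $\ell_1$-concentration of empirical transition distributions. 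Under known transitions this term is absent, which is precisely what yields the improved $O(\horizon\sqrt{\stateSet\actionSet T\log(\stateSet\actionSet T)})$ rate and matches the $\tilde O(\sqrt{\stateSet\actionSet T})$ lower bound of \citep{jaksch2010near}.

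First I would invoke the posterior-sampling identity: conditioned on the history $\history_k$ at the start of episode $k$, the sampled world $\world_k$ --- hence the sampled MDP $\mdp_k$ and its optimal policy $\policy_k$ --- is distributed identically to the true world $\world^*$. Applying this to the optimal-value functional gives $\expect{}{V^{\mdp_k}_{\policy_k}(\state_1)} = \expect{}{V^{\mdp^*}_{\policy^*}(\state_1)}$, which rewrites the Bayesian regret as $\expect{}{\sum_k \left(V^{\mdp_k}_{\policy_k}(\state_1) - V^{\mdp^*}_{\policy_k}(\state_1)\right)}$ --- a quantity I can control because both terms are evaluated under the same played policy $\policy_k$. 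Bounding this suffices: since the best-discovered policy satisfies $V^{\mdp^*}_{\hat\policy_k}(\state_1) \ge V^{\mdp^*}_{\policy_k}(\state_1)$, each anytime gap $\Delta_k$ is dominated by the corresponding played-policy regret.

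Next I would apply the value-difference (Bellman-error) lemma. Because $\mdp_k$ and $\mdp^*$ share the same deterministic transitions, their value gap under $\policy_k$ collapses to a sum of per-edge reward discrepancies along the single, fully determined trajectory $\seq{\state}{\horizon}$ that $\policy_k$ induces, with \emph{no} residual transition term. Taking posterior expectations, each summand becomes the deviation of the posterior-mean edge reward from its truth, which I would control with a confidence set: an edge (state-action pair) evaluated $n_k(\state,\action)$ times has reward confidence width $O(\sqrt{\log(\stateSet\actionSet T)/n_k(\state,\action)})$ --- a Hoeffding/sub-Gaussian bound for the binary collision status, with the $\log(\stateSet\actionSet T)$ arising from a union bound over the $\stateSet\actionSet$ pairs and the $T$ rounds, whose failure probability is lower order. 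Summing these widths over episodes via the pigeonhole inequality $\sum_k n_k(\state,\action)^{-1/2} \le 2\sqrt{N(\state,\action)}$ and then over state-action pairs via Cauchy--Schwarz, using $\sum_{\state,\action} N(\state,\action) = T$, gives $\sum_{\state,\action}\sqrt{N(\state,\action)} \le \sqrt{\stateSet\actionSet T}$; the leading $\horizon$ and the logarithm are inherited from the reward term of the imported decomposition (the per-episode value is a sum of up to $\horizon$ edge rewards), producing the stated bound.

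The main obstacle is twofold. First, one must make the removal of the transition contribution rigorous \emph{inside} the decomposition --- i.e.\ verify that the value-difference lemma leaves no transition term when $\transition$ is common to both MDPs, and confirm that the generic analysis's extra $\sqrt{\stateSet}$ was attributable solely to transition concentration. Second, the posterior concentration of the rewards must hold \emph{uniformly} over edges and episodes; this is immediate for an independent-Bernoulli prior (each edge carries a fresh Bernoulli posterior), but for general priors it requires the martingale concentration machinery of \citep{russo2014learning} applied to Bayesian confidence sets. A final piece of bookkeeping is relating the number of episodes $m$ and the total evaluation count $T$: since the \failFast validator may terminate an episode after fewer than $\horizon$ edges, one has $T \le m\horizon$, and this must be tracked so that $\horizon$ and $T$ enter the final bound with the claimed exponents.
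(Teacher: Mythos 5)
Your proposal follows essentially the same route as the paper's proof: bound the anytime gap by the played-policy regret, apply the posterior-sampling identity of \citep{osband2013more} to swap $V^{\mdp^*}_{\policy^*}$ for $V^{\mdp_k}_{\policy_k}$, exploit the known deterministic transitions to reduce the value gap to per-edge reward discrepancies, and control those with \citep{jaksch2010near}-style confidence sets summed via pigeonhole and Cauchy--Schwarz. The paper handles your concern about general priors by using frequentist confidence sets (which contain $\mdp_k$ with high probability because it is identically distributed to $\mdp^*$), but this is a detail of technique rather than a different argument.
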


\begin{proof}
We follow the analysis of \cite{osband2013more}, adapted for the special case of a deterministic MDP to obtain tighter regret bounds.
We refer the reader to \extappref{sec:regret-proof} for details.
\end{proof}

\subsection{Estimating Edge Collision Posteriors}
\label{subsec:method-posterior}

In our experiments, we consider two possible approaches for estimating the posterior distribution $P(\world | \history)$.
If there is no dataset of previous planning problems to learn from, the collision-checked configurations from $\history$ can inform a nearest neighbor-based posterior for the current problem~\cite{pan2012ibl}.
We find that only considering the 1-nearest neighbor $q_{near}$ produces the best collision estimates due to massive label imbalance in favor of collision-free points.

We differ from \cite{pan2012ibl} by assuming a uniform $\text{Beta}(1, 1)$ prior on configuration space collision probability.
The status of the nearest neighbor counts as a partial success or failure with weight $\exp(-\eta \|q - q_{near}\|)$.
The expected posterior probability that configuration $q$ is free is then
\begin{equation*}
\textstyle
\expect{}{P(\world(q) = 0 | \history)} = \frac{\exp(-\eta \|q - q_{near}\|) \1[\world(q_{near}) = 0] + 1}{\exp(-\eta \|q - q_{near}\|) + 2}.
\end{equation*}
To estimate the posterior probability that an edge is collision-free, we take the minimum collision-free probability of discretized points along the edge.

Alternatively, if we know that worlds are uniformly drawn from a finite set of possible worlds, we can precompute the collision statuses for every edge in the graph against every world $\world$.
Then, the posterior is simply uniform over the remaining set of worlds that are consistent with $\history$.

This finite set posterior is one example of how planning algorithms may be able to leverage the structure existing in everyday environments.
The configuration space nearest-neighbor posterior only assumes that nearby configurations will have similar labels, which is a more broadly applicable (but less informative) structure.
This makes it well-suited for novel environments where the posterior does not have problem examples to learn from.

\begin{figure*}[!t]
\centering

\includegraphics[height=0.19\linewidth]{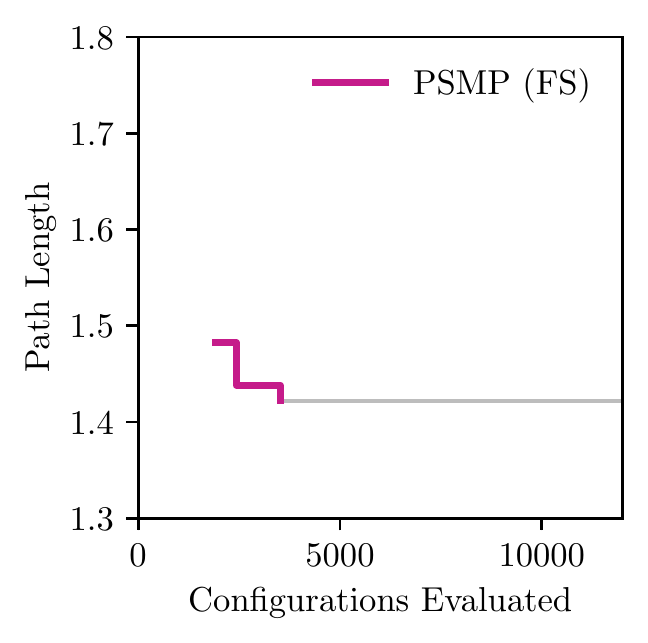}
\includegraphics[height=0.19\linewidth]{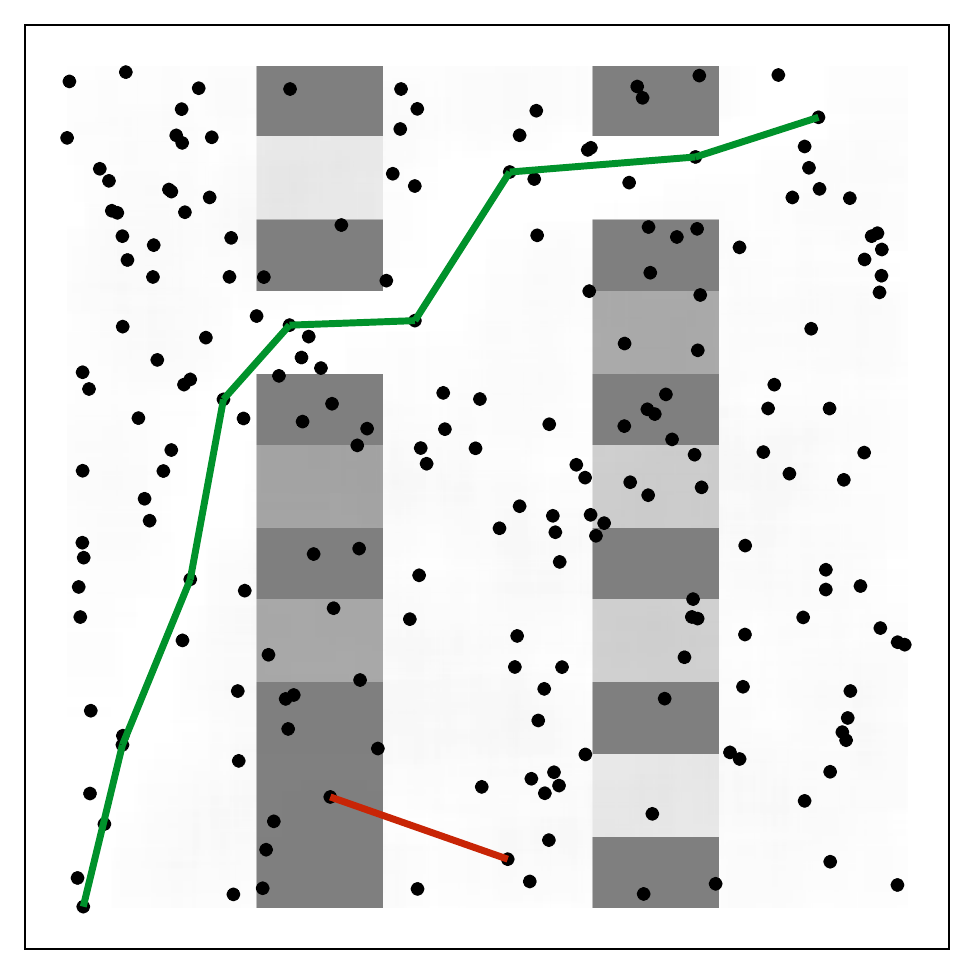}
\includegraphics[height=0.19\linewidth]{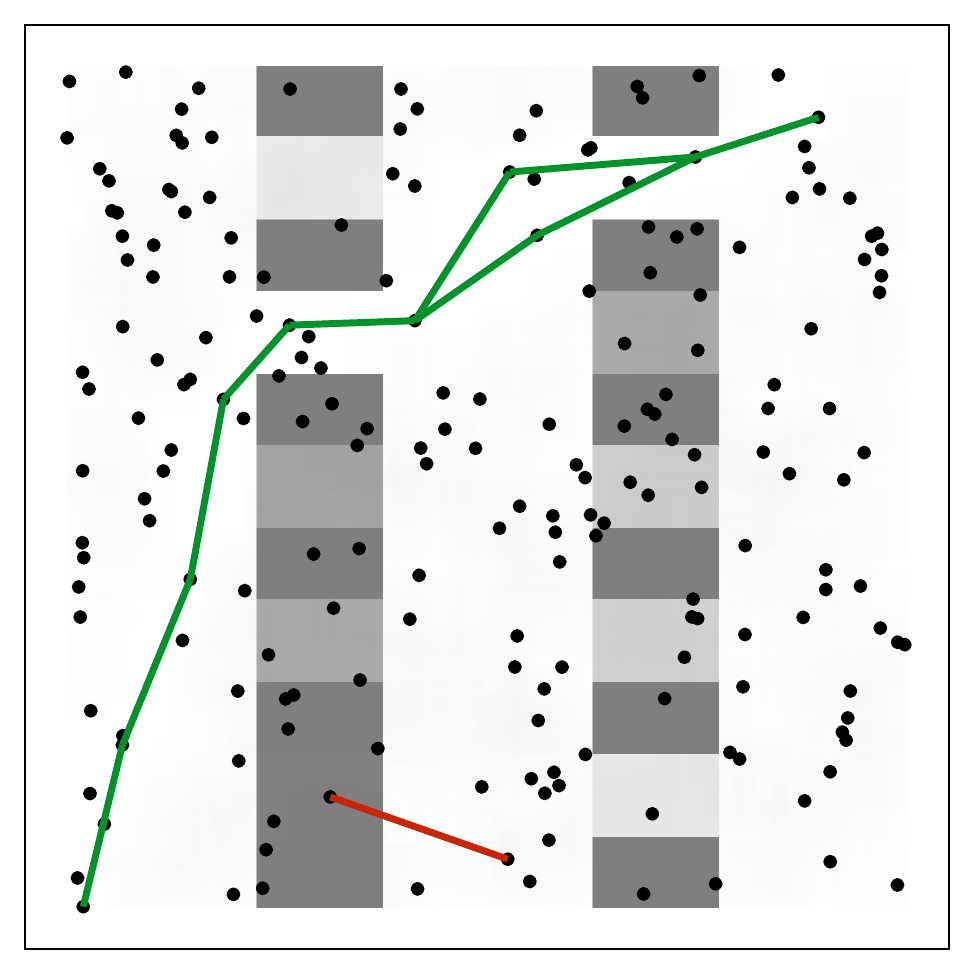}
\includegraphics[height=0.19\linewidth]{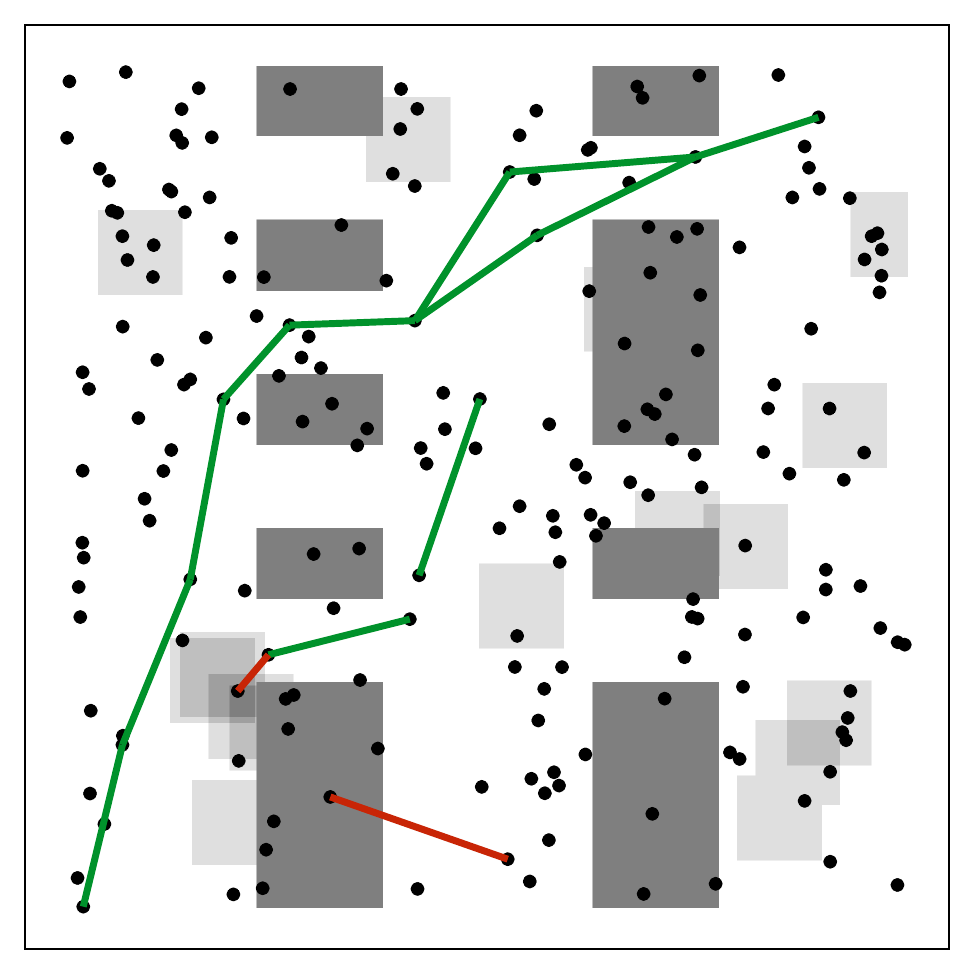}
\includegraphics[height=0.19\linewidth]{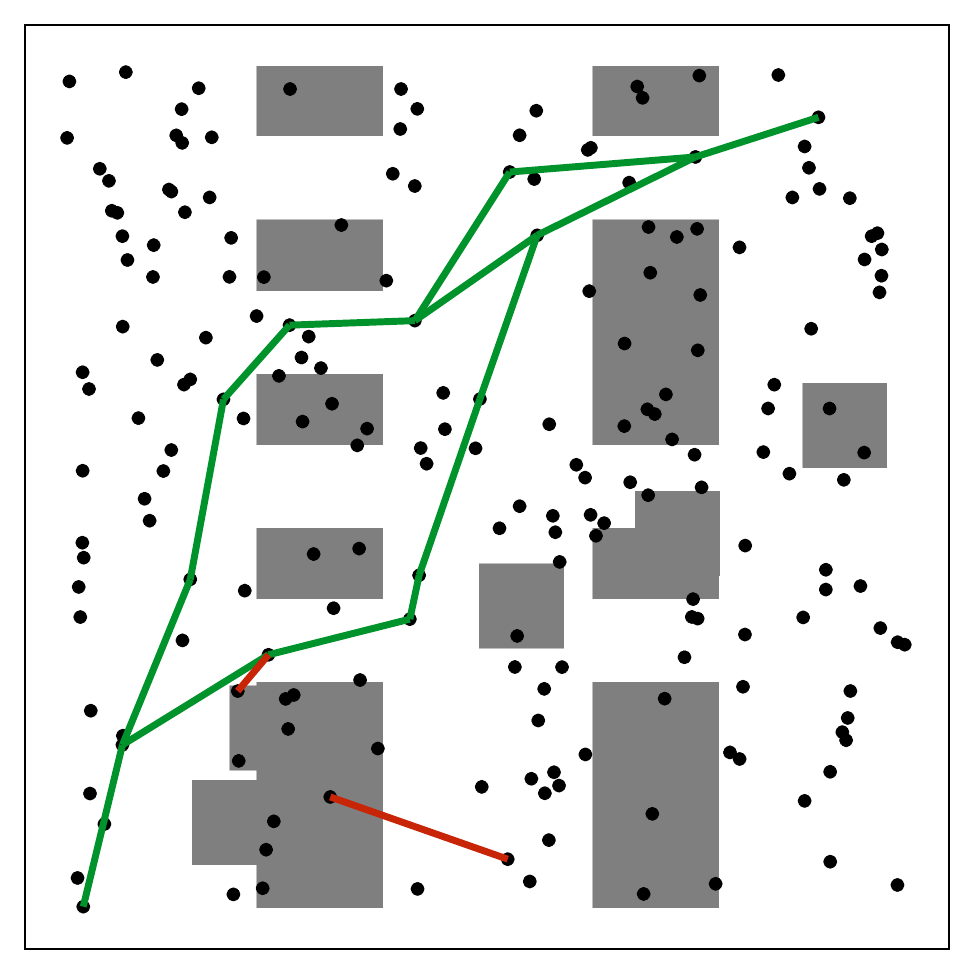}

\includegraphics[height=0.19\linewidth]{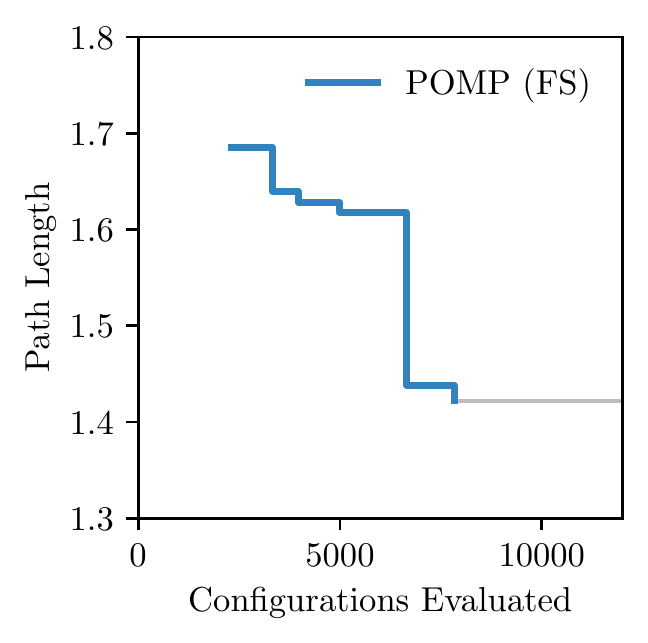}
\includegraphics[height=0.19\linewidth]{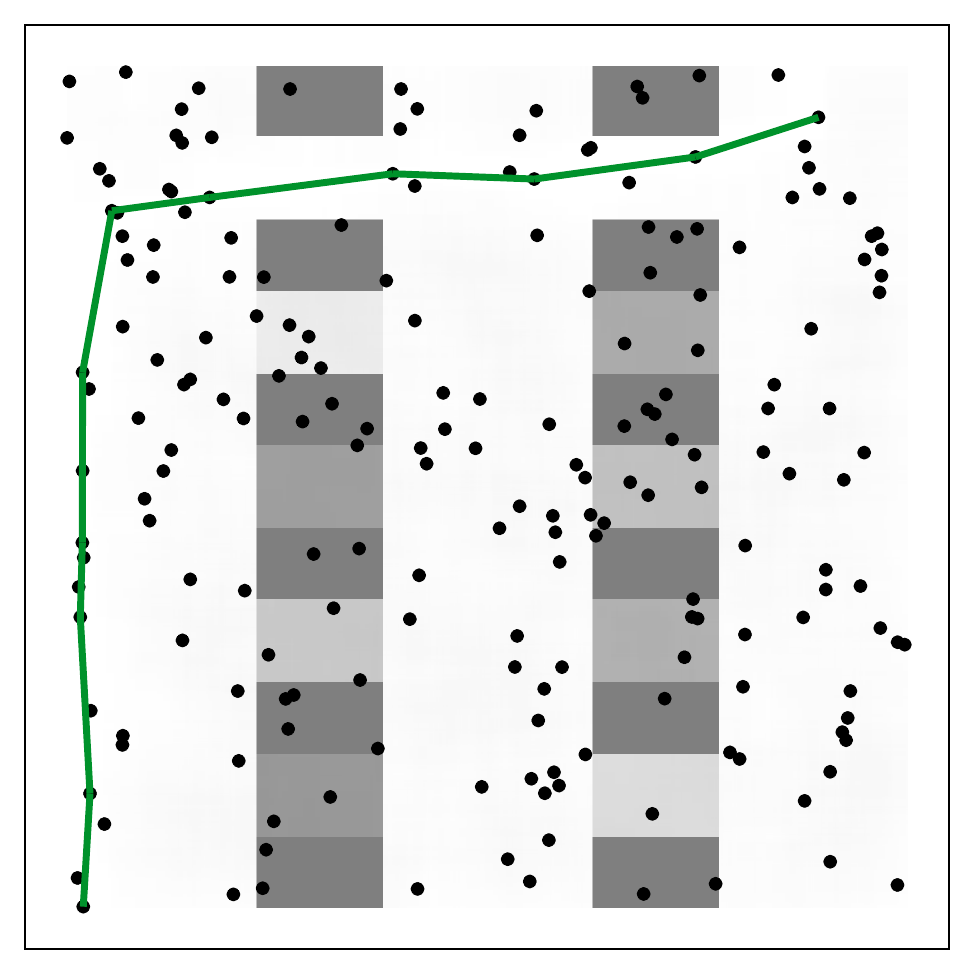}
\includegraphics[height=0.19\linewidth]{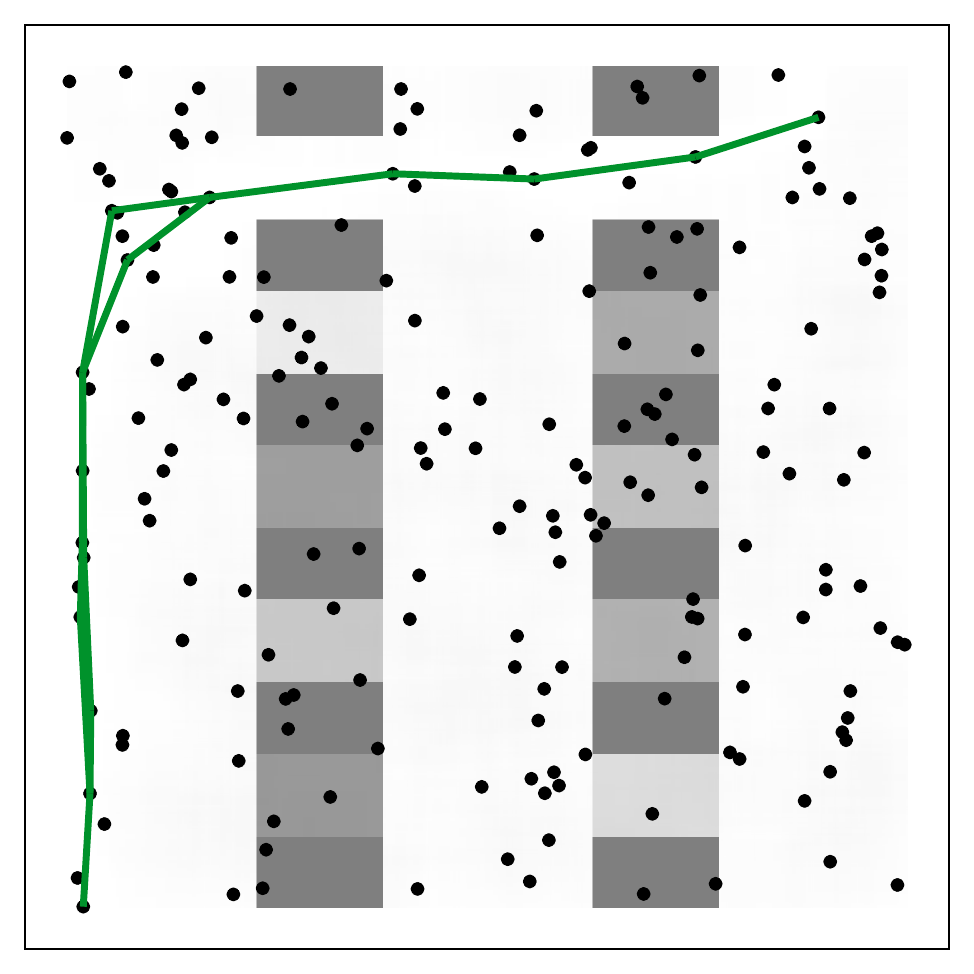}
\includegraphics[height=0.19\linewidth]{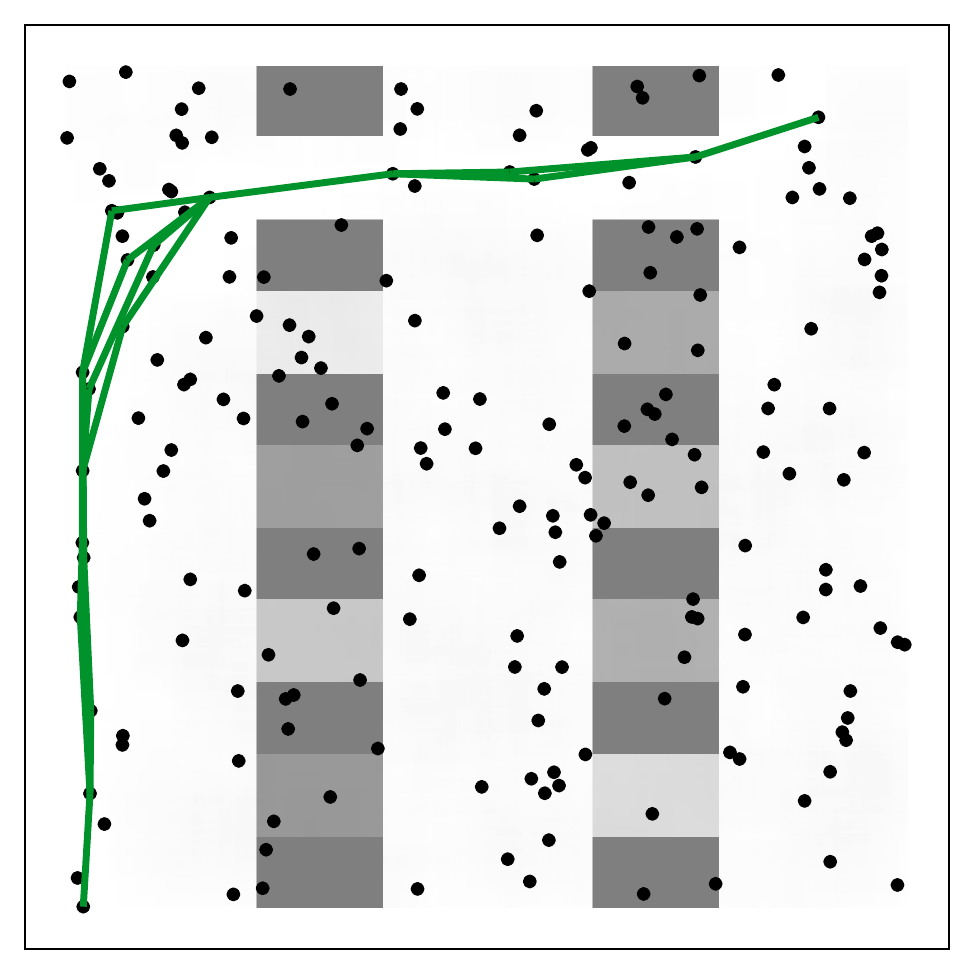}
\includegraphics[height=0.19\linewidth]{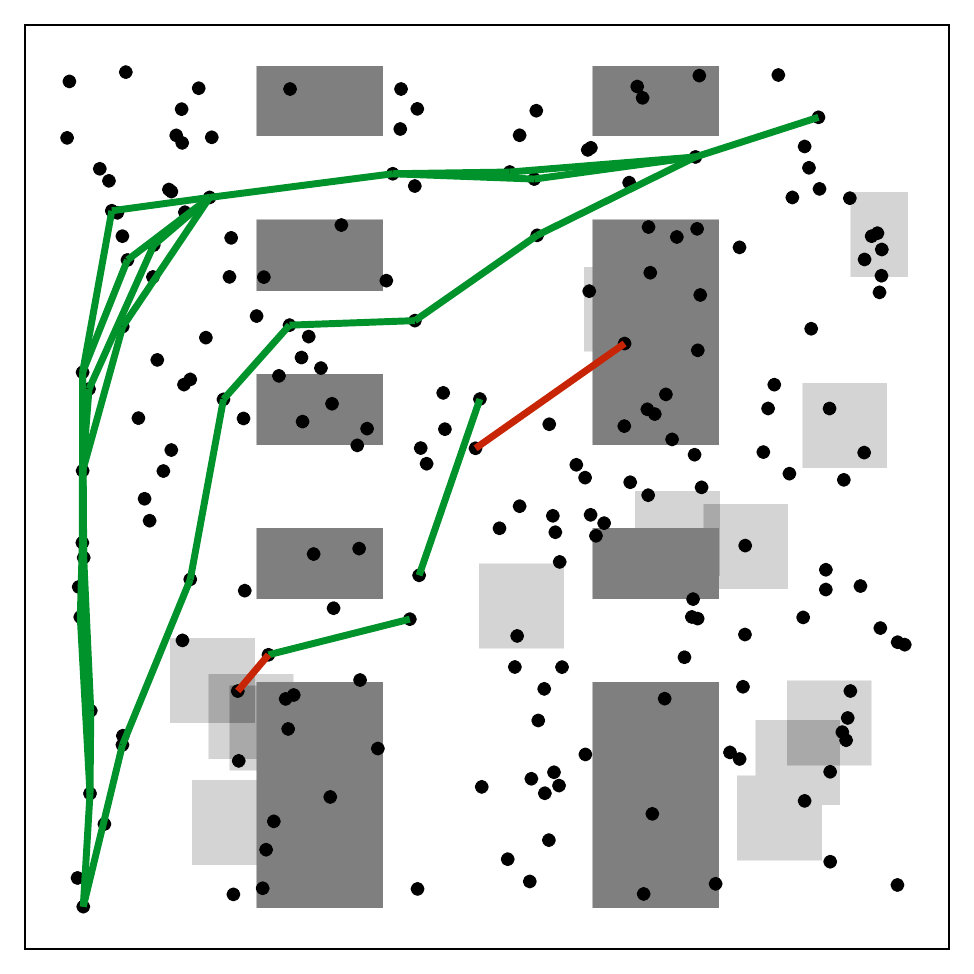}

\includegraphics[height=0.19\linewidth]{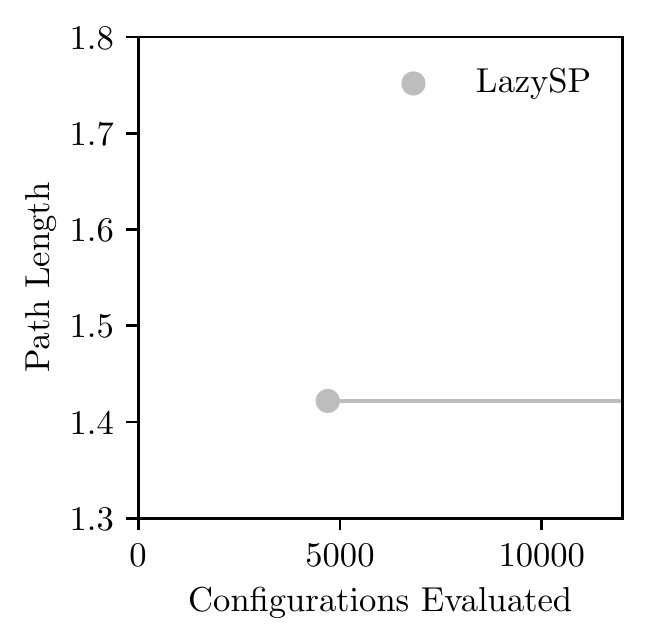}
\includegraphics[height=0.19\linewidth]{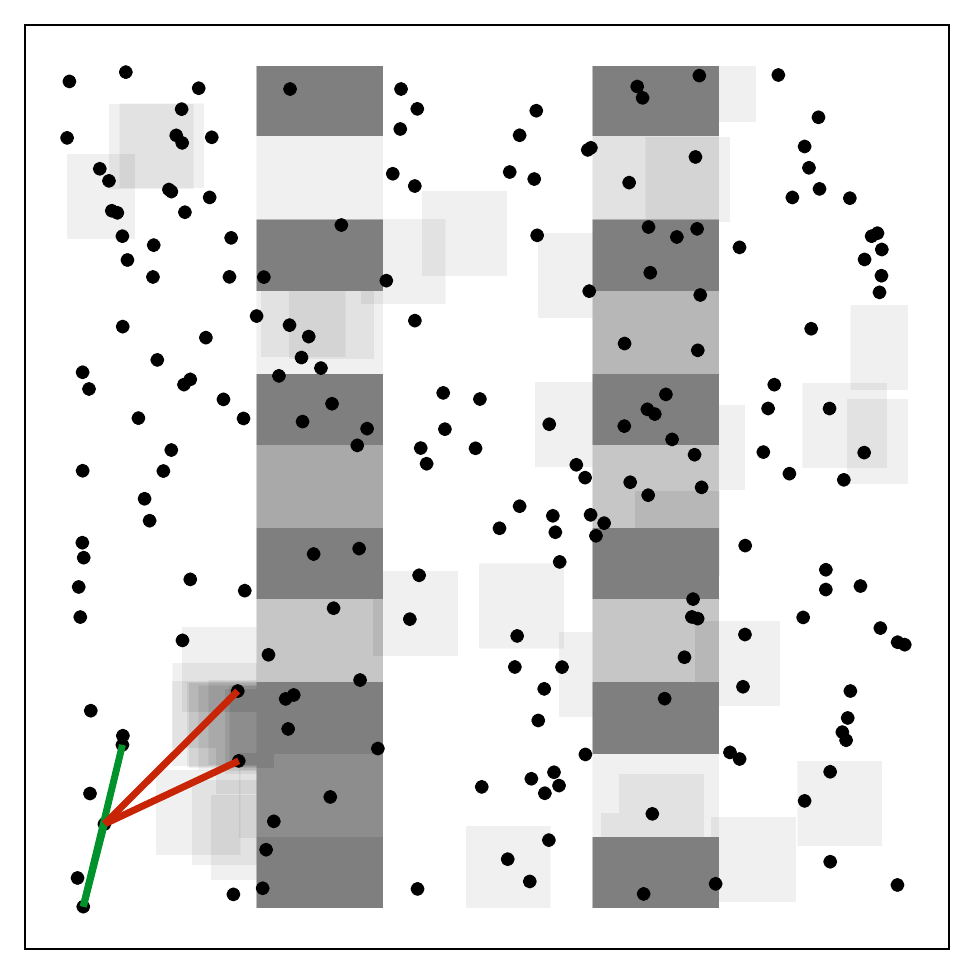}
\includegraphics[height=0.19\linewidth]{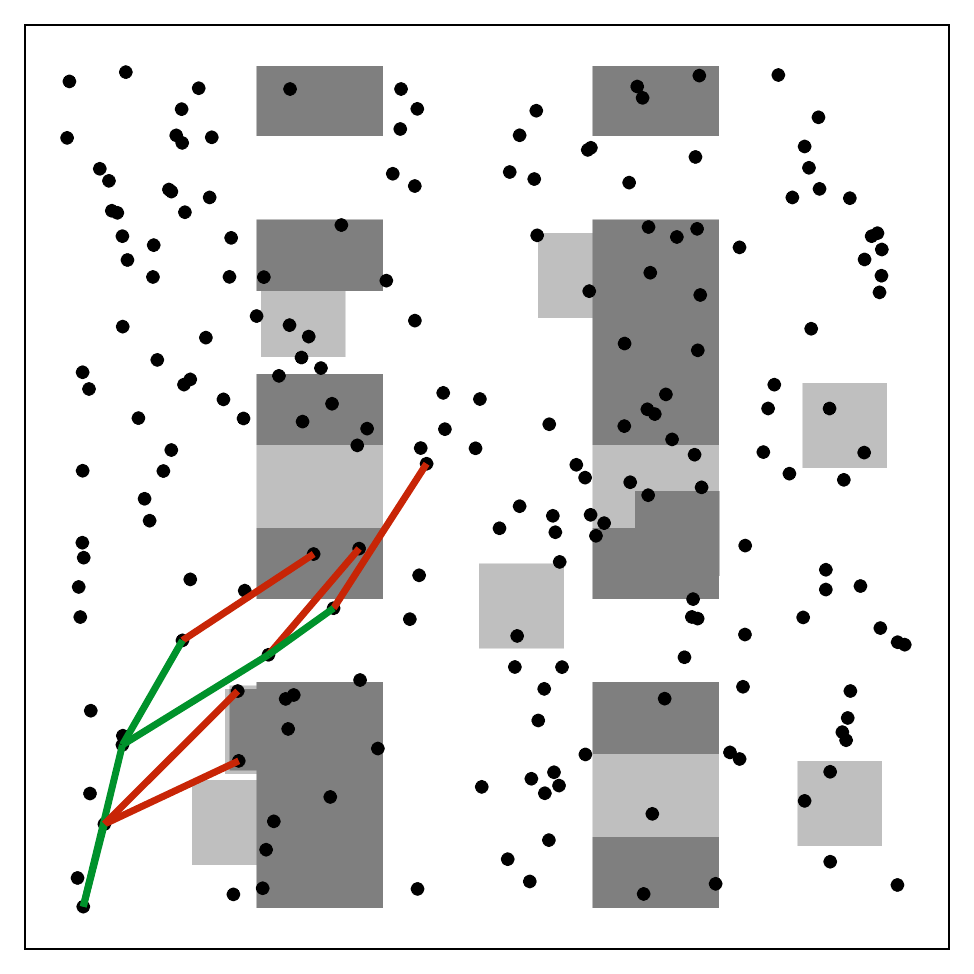}
\includegraphics[height=0.19\linewidth]{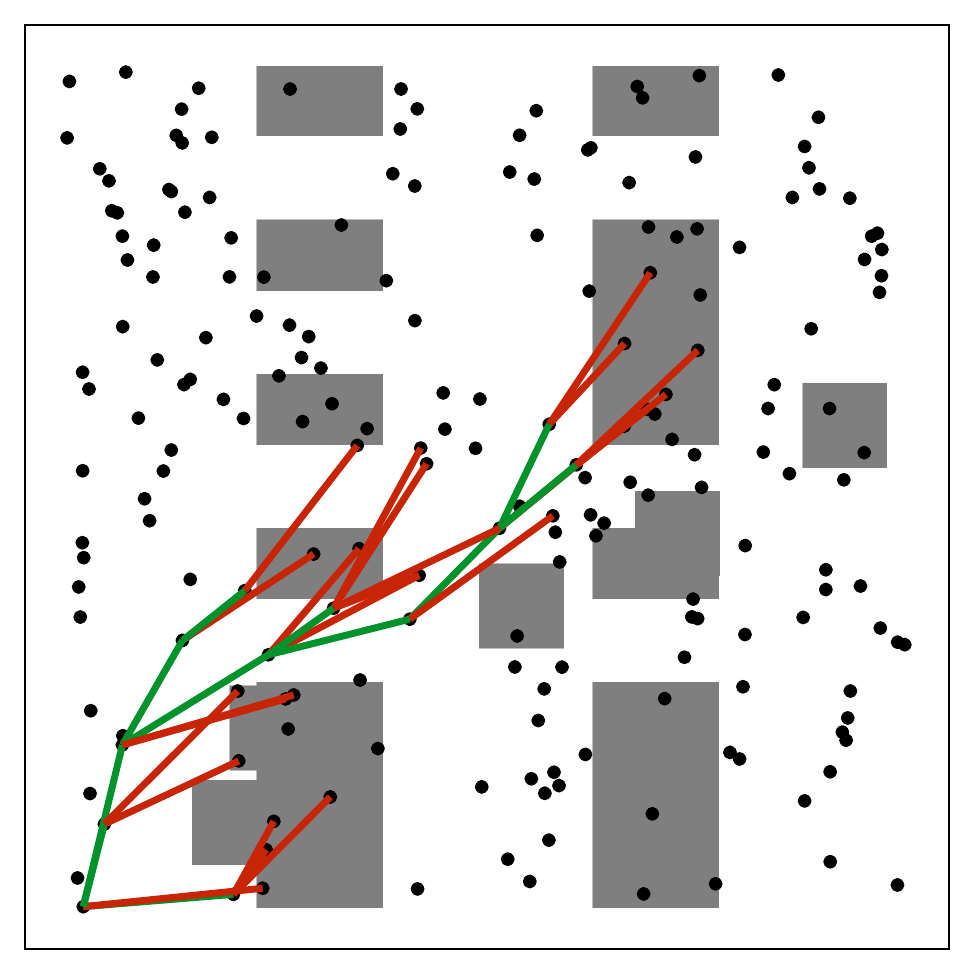}
\includegraphics[height=0.19\linewidth]{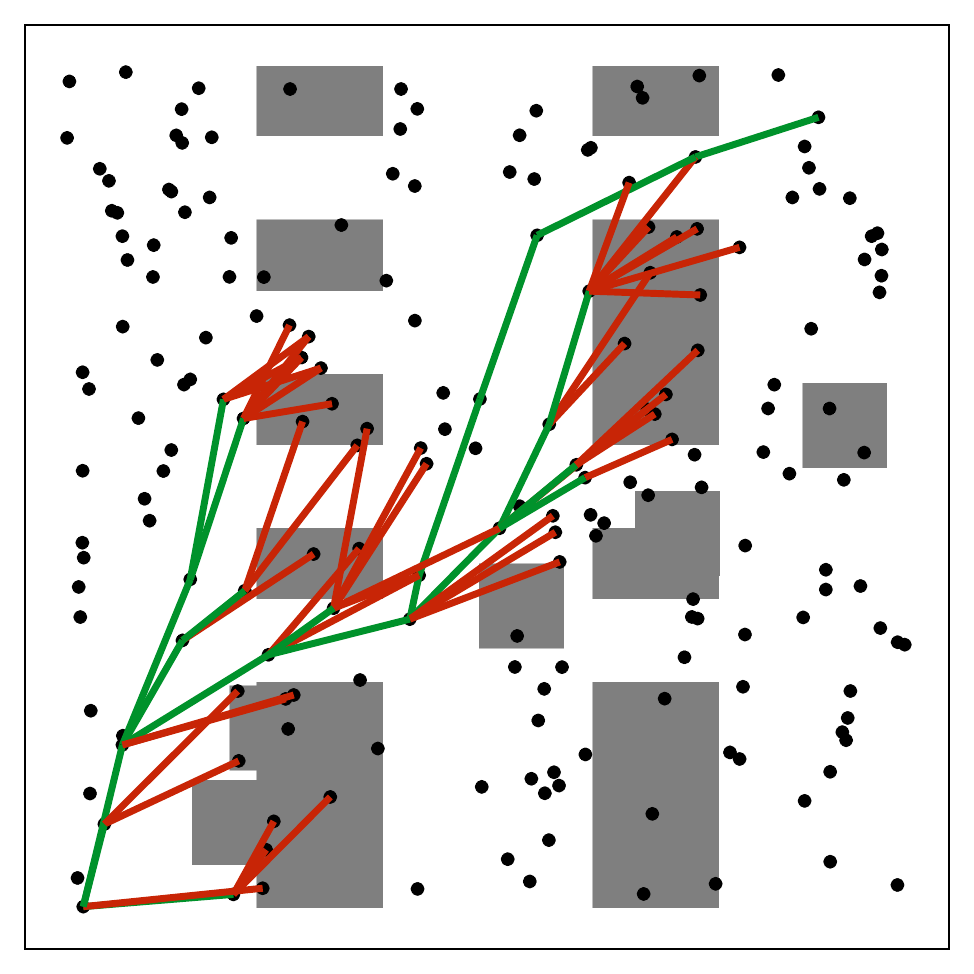}

\caption{
  (Left) Anytime performance of \algName (top), POMP (middle), and \lazysp (bottom).
  The gray line corresponds to the environment's shortest path.
  (Right) Snapshots of each algorithm's progress, with the finite set posterior used by \algName and POMP.
  Regions with higher probability of collision are colored with darker shades of gray.
  Evaluated edges are either found to be in collision (red) or collision-free (green).
  % \bhnote{TODO: add the best path so far in yellow.}
}
\label{fig:filmstrip}
\vspace{-1\baselineskip}
\end{figure*}

\section{Experiments}
\label{sec:results}

We evaluate the anytime performance of \algName on 2-DOF~\cite{choudhury2017active} and 7-DOF motion planning datasets (\fref{fig:dataset}). 
The 7-DOF manipulator dataset was generated by randomly perturbing objects from an initial cluttered environment~\cite{qureshi2019mpnet}.
For 2-DOF problems, we compare \algName and POMP, with both the nearest neighbor-based (NN) and finite set (FS) posterior variants from \sref{subsec:method-posterior}.
We have highlighted their performance on two datasets here, and refer the reader to \extappref{sec:complete-results} for results on five more.
For 7-DOF problems, we only consider the finite set posterior for both \algName and POMP.
In this domain, we additionally combine RRTConnect with path shortening, a commonly-used heuristic for refining an initial feasible path~\cite{kuffner2000rrt}.
Because collision checking dominates planning time, we report the number of configurations checked by each algorithm.
We refer the reader to \extappref{sec:experimental-details} for further experimental details.

We visualize sample runs by \algName, POMP, and \lazysp in \fref{fig:filmstrip}.
Note that since \lazysp is not an anytime algorithm, it only produces one solution.
Using the same finite set posterior as POMP, \algName finds a shorter feasible path with fewer collision checks.
Furthermore, it returns the shortest path faster than the uninformed \lazysp baseline.
POMP carefully attempts to avoid edges that may be in collision;
as a result, refining the initial feasible solution can take a substantial amount of time.

The performance of these algorithms on the remainder of the test set (200 environments) is summarized in the first row of \fref{fig:plots}.
The anytime performance of the nearest neighbor-based (NN) and finite set (FS) posteriors have been separated for clarity.
\algName with the FS posterior has a marked improvement over the NN variant:
it finds initial feasible solutions faster than all other algorithms and quickly refines them.
However, averaging the worlds in the feasible set that are consistent with the evaluation history---as POMP does---can lead to over-exploration of impossible scenarios and degraded performance.

In the maze environments (\fref{fig:plots}, middle), \algName with the FS posterior continues to outperform other algorithms.
However, the evaluation history quickly narrows the posterior to a single feasible world, so POMP enjoys similar performance to \algName.
In these settings where there are very few feasible paths, \algName with the NN posterior does not perform as well.
We believe that this is because an anytime objective implicitly assumes the existence of multiple feasible paths.
Although posterior sampling explores options quickly, exploration may be unnecessary if this assumption is violated.
In such scenarios, a posterior that captures more global correlations in the environment may be needed for improved performance relative to an algorithm like \lazysp.

On the 7-DOF manipulator environments, we chose to compare with heuristically shortening an initial feasible RRT path (RRT+PS) rather than RRT*~\cite{karaman2011sampling};
although the latter guarantees asymptotic optimality, it empirically takes much longer to find initial solutions (\extappref{sec:complete-results}).
While RRT+PS finds a feasible path faster than \lazysp, it needs more collision checks and emits longer paths than \algName or POMP with the FS posterior~(\fref{fig:plots}, bottom).

\begin{figure*}
\centering

\includegraphics[height=0.3\linewidth]{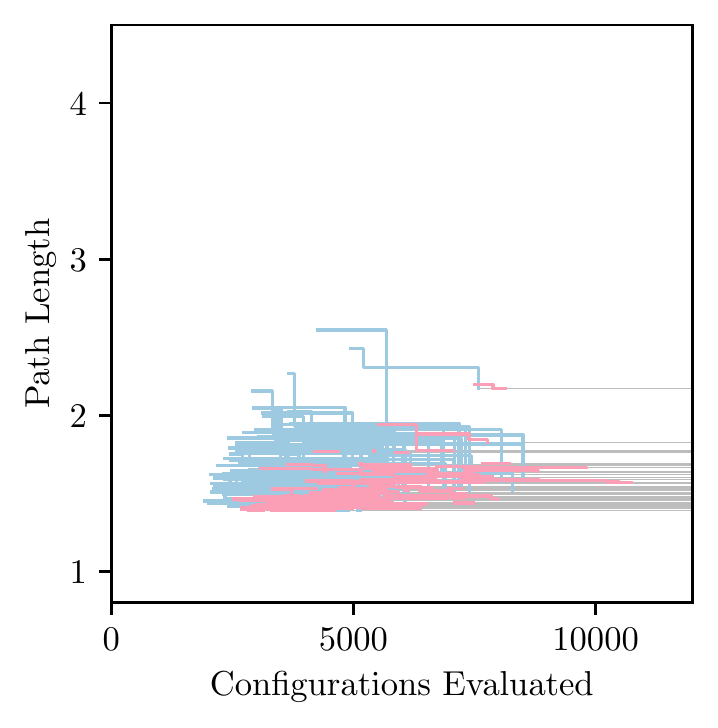}
\hspace{1em}
\includegraphics[height=0.3\linewidth]{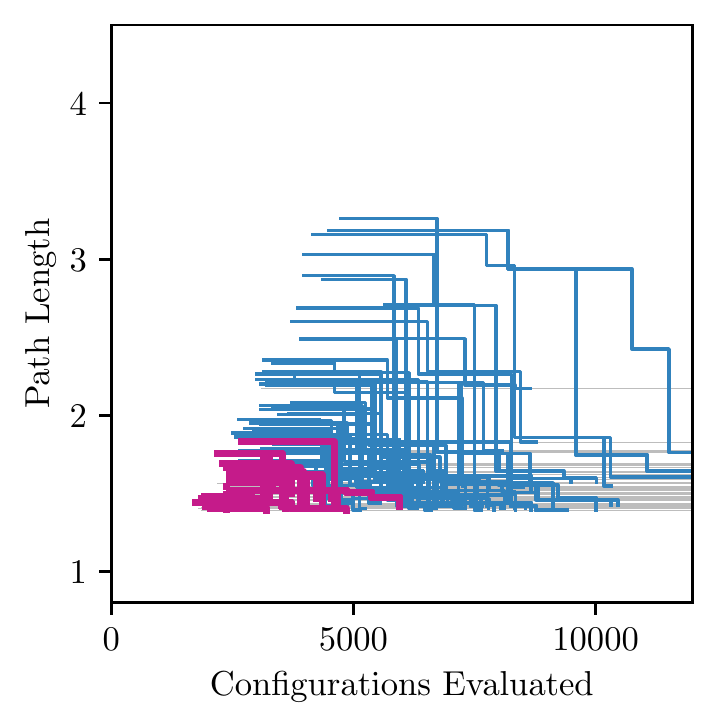}
\hfill
\includegraphics[height=0.3\linewidth]{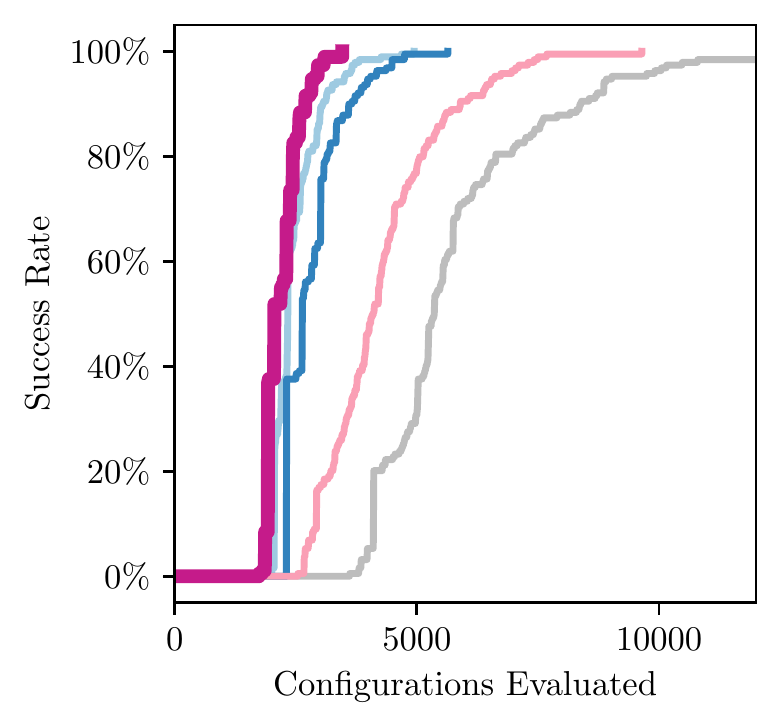}

\includegraphics[height=0.3\linewidth]{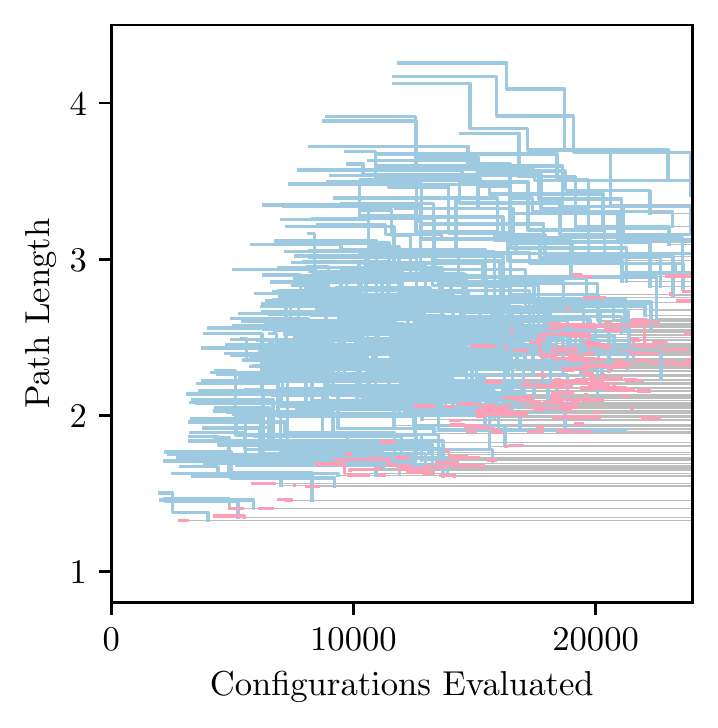}
\hspace{1em}
\includegraphics[height=0.3\linewidth]{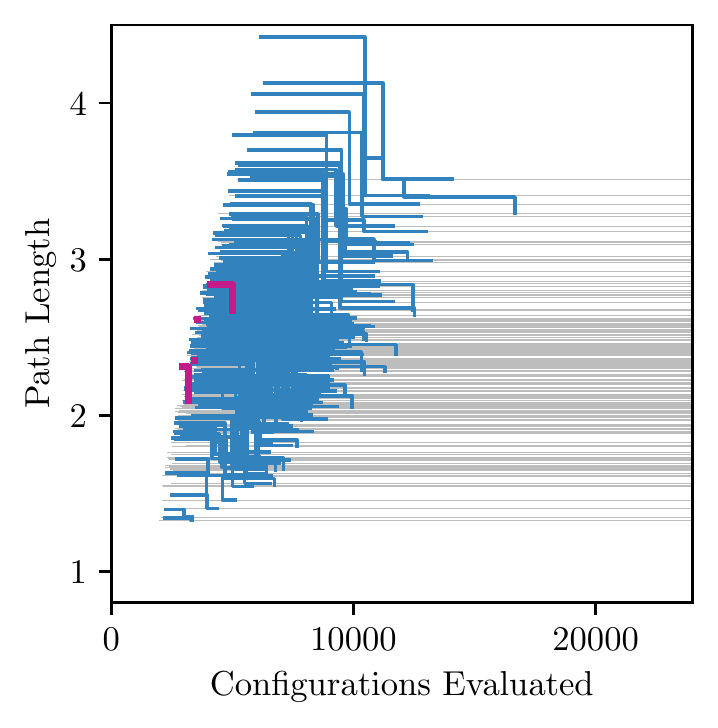}
\hfill
\includegraphics[height=0.3\linewidth]{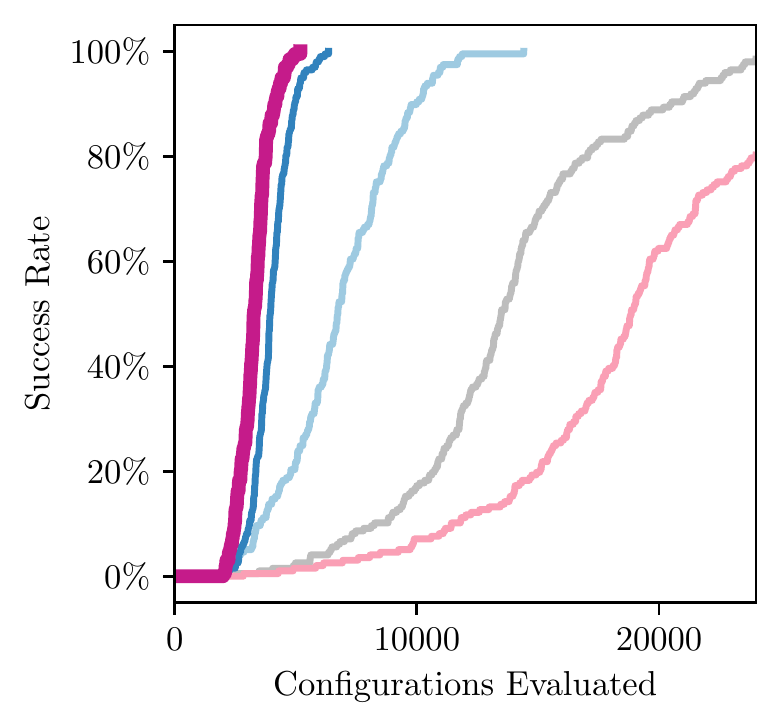}

\includegraphics[height=0.3\linewidth]{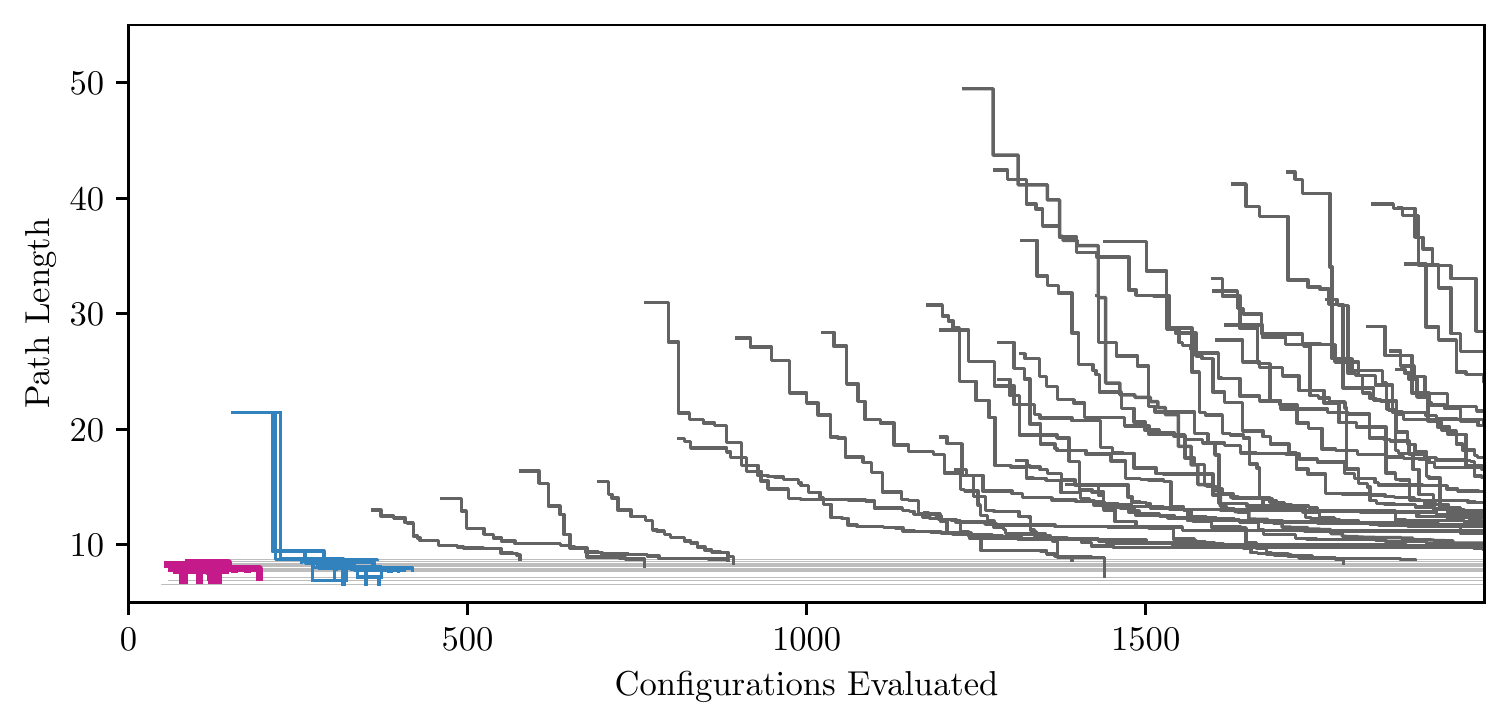}
\hfill
\includegraphics[height=0.3\linewidth]{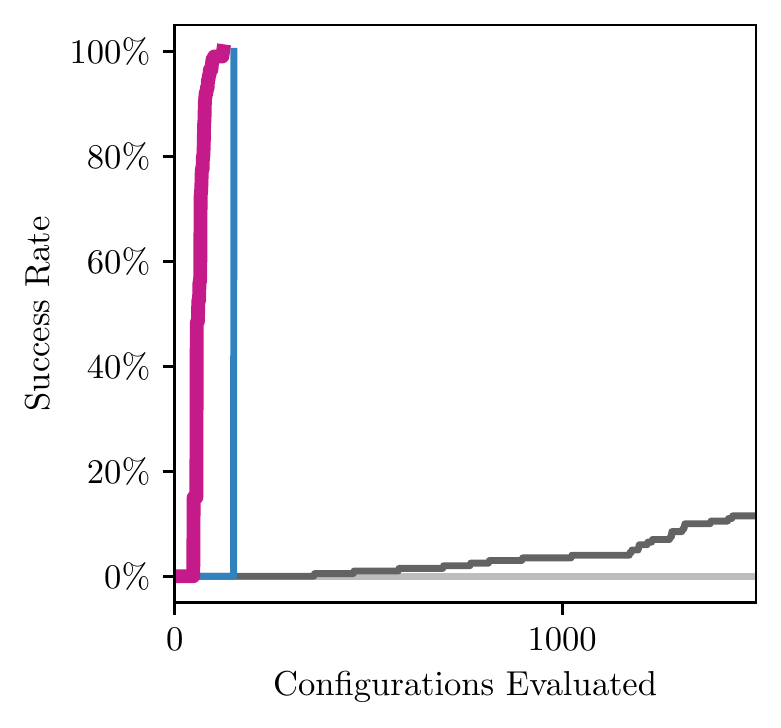}

\includegraphics[width=0.5\linewidth]{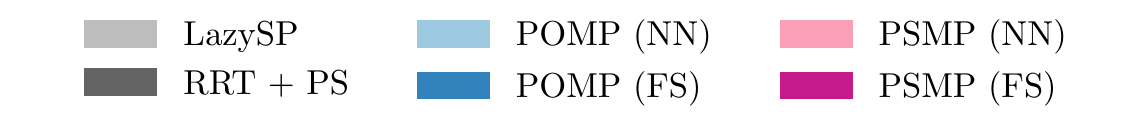}

\caption{
  (Left) Length of the best feasible path discovered by each anytime algorithm over time.
  (Right) Collision checking budget versus the percentage of planning problems where that budget is sufficient to discover a feasible path.
  (Top and Middle) 2-DOF environments corresponding to (\fref{fig:dataset}, Left and Right).
  (Bottom) 7-DOF robot manipulator environment, corresponding to (\fref{fig:dataset}, Bottom).
}
\label{fig:plots}
\vspace{-0.5\baselineskip}
\end{figure*}

% Other experiments to consider running in the future:
% - For something that's more realistic than the finite set posterior: factored distribution for objects? e.g. independent distributions for table, cup, etc.

% !TEX root=../root.tex

\section{Discussion and Future Work}
\label{sec:discussion}

Anytime algorithms should rapidly discover progressively shorter paths.
We have formalized this intuitive objective as minimizing Bayesian regret.
Sublinear Bayesian regret---which \algName achieves---implies asymptotic optimality, while demanding good intermediate performance.
We hope that drawing this connection between anytime motion planning and Bayesian reinforcement learning will open the door to further regret analysis of anytime algorithms.

In this work, we have focused on the problem of anytime search on a fixed graph.
Many existing anytime algorithms take an incremental densification approach, requiring new samples from the configuration space for continued improvement.
Regret analysis for these continuous-space problems is an open challenge.

Empirically, \algName has strong performance and improves further when more knowledge is incorporated into the posterior structure.
Using data from previous planning environments to learn the underlying structure of collision posteriors, via e.g. unsupervised generative models, will enable \algName to quickly solve new instances of those problems.
\section{Appendices}

This paper is available with appendices at \url{https://arxiv.org/abs/2002.11853}.

\clearpage

{
\small
\bibliographystyle{unsrtnat}
\bibliography{references}
}

\clearpage

% !TEX root = ../root.tex

\section*{Appendix}
\label{sec:appendix}

\renewcommand\thesubsection{\Alph{subsection}}

\subsection{Proof of \thmref{thm:regret}}
\label{sec:regret-proof}

\begin{proof}
We begin by noting that regret is measured w.r.t the best discovered policy $\hat{\policy}_k$ which is history-dependent, i.e., dependent on $\seq{\policy}{k}$.
Hence, we upper bound it with an alternative version of regret w.r.t the executed policy $\policy_k$. 
\begin{equation*}
\begin{aligned}
  \Delta_k
  &= V^{\mdp^*}_{\policy^*}(\state_1) - V^{\mdp^*}_{\hat{\policy}_k}(\state_1) \\
  &\leq V^{\mdp^*}_{\policy^*}(\state_1) - V^{\mdp^*}_{\policy_k}(\state_1)
  \leq \bar{\Delta}_k
\end{aligned}
\end{equation*}
Posterior sampling leverages the fact that $\mdp^*$ and $\mdp_k$ are identically distributed.
One hurdle in the analysis is that the optimal policy $\policy^*$ is not directly observed.
Hence, we introduce yet another notion of regret which does not depend on $\policy^*$.
\begin{equation*}
  \tilde{\Delta}_k = V^{\mdp_k}_{\policy_k}(\state_1) - V^{\mdp^*}_{\policy_k}(\state_1) \\
\end{equation*}
which is the difference in expected value of the policy $\policy_k$ under the sampled MDP $\mdp_k$ and the true MDP $\mdp^*$ which is observed.
We apply Theorem 2 from \cite{osband2013more} to show that the two regrets are equal in expectation
\begin{equation*}
 \expect{}{\sum_{k=1}^m \bar{\Delta}_k} = \expect{}{\sum_{k=1}^m \tilde{\Delta}_k} 
\end{equation*}
with high probability.

We will now bound $\expect{}{\sum_{k=1}^m \tilde{\Delta}_k}$.
Unlike the analysis in \cite{osband2013more} for the stochastic case, the deterministic regret is much easier to bound.
It amounts to the difference in rewards observed in $\mdp^*$ versus $\mdp_k$.
We will bound this by arguing that $\mdp^*$ concentrates around $\mdp_k$ using the notion of confidence sets as in \citep{jaksch2010near}.
Let $t_k$ be the time at the beginning of the $k^{th}$ episode.
Let $\hat{\reward}(\state, \action)$ be the empirical average reward and $ N_{t_k}(\state, \action)$ be the number of times $(\state, \action)$ was queried.
We define the following confidence set for episode $k$
\begin{equation*}
  \mdpSet_k  = \{ \mdp : |\hat{\reward}(\state, \action) - \reward^\mdp(\state,\action)| \leq \beta_k(\state,\action) \quad \forall (\state, \action) \}
\end{equation*}
where $\beta_k(\state,\action) = \sqrt{ \frac{7 \log (2 \stateSet \actionSet m t_k)}{\max\{1, N_{t_k}(\state, \action) \}} }$ is chosen to ensure both $\mdp_k$ and $\mdp^*$ belong to $\mdpSet_k$ with high probability as specified in \citep{jaksch2010near}.
We bound regret as follows:
\begin{equation*}
\begin{aligned}
& \expect{}{\sum_{i=1}^m \tilde{\Delta}_k}\\
& \leq \expect{}{\sum_{k=1}^m \tilde{\Delta}_k \Ind(\mdp_k, \mdp^* \in \mdpSet_k)} + 2 \tau \sum_{k=1}^m P(\mdp^* \notin \mdpSet_k) \\
& \leq \expect{}{\sum_{k=1}^m \expect{}{\tilde{\Delta}_k | \mdp^*, \mdp_k} \Ind(\mdp_k, \mdp^* \in \mdpSet_k)} + 2 \tau \\
& \leq \expect{}{\sum_{k=1}^m \sum_{i=1}^\tau \min \{ \beta_k (\state_{t_k + i}, \action_{t_k + i}), 1\}} + 2 \tau \\
& \leq \min\{ \horizon \sum_{k=1}^m \sum_{i=1}^\tau \min \{ \beta_k (\state_{t_k + i}, \action_{t_k + i}), 1\}, T \}
\end{aligned}
\end{equation*}
where the second inequality follows from the fact that Lemma 17 of \citep{jaksch2010near} shows $P(\mdp^* \notin \mdpSet_k) \leq \frac{1}{m}$. The final inequality follows from the fact that worst case regret is bounded by $T$.

We now bound
\begin{equation*}
  \textstyle
  \min\{ \horizon \sum_{k=1}^m \sum_{i=1}^\tau \min \{ \beta_k (\state_{t_k + i}, \action_{t_k + i}), 1\}, T \}
\end{equation*}
First note that
\begin{equation*}
\begin{aligned}
  &\sum_{k=1}^m \sum_{i=1}^\tau \beta_k(s, a)\\
  &\leq \sum_{k=1}^m \sum_{i=1}^\tau \Ind(N_{t_k} \leq \tau) + \sum_{k=1}^m \sum_{i=1}^\tau \Ind(N_{t_k} > \tau) \beta_k(s, a)
\end{aligned}
\end{equation*}
The first term is shown to be bounded.
\begin{equation*}
  \textstyle
  \sum_{k=1}^m \sum_{i=1}^\tau \Ind(N_{t_k} \leq \tau) \leq 2 \tau \stateSet \actionSet
\end{equation*}
The second term utilizes the following bound
\begin{equation*}
  \textstyle
  \sum_{k=1}^m \sum_{i=1}^\tau \sqrt{ \frac{ \Ind(N_{t_k} > \tau) }{\max\{1, N_{t_k}(\state, \action) \}} } \leq \sqrt{2 \stateSet \actionSet T}
\end{equation*}
We can now bound
\begin{equation*}
\begin{aligned}
&\min\{ \horizon \sum_{k=1}^m \sum_{i=1}^\tau \min \{ \beta_k (\state_{t_k + i}, \action_{t_k + i}), 1\}, T \} \\
&\leq \min\{ 2\tau^2 \stateSet \actionSet + \tau\sqrt{14 \stateSet \actionSet T \log ( \stateSet \actionSet T )}, T\} \\
&\leq \tau \sqrt{ 16 \stateSet \actionSet T \log( \stateSet \actionSet T) }
\end{aligned}
\end{equation*}

Hence the Bayesian regret is bounded by $O(\tau \sqrt{ \stateSet \actionSet T \log( \stateSet \actionSet T) })$. 
\end{proof}

\subsection{Experimental Details}
\label{sec:experimental-details}

\subsubsection{Nearest Neighbor-Based (NN) Posterior}

We assume a uniform $\text{Beta}(1, 1)$ prior on configuration space collision probability.
The status of the nearest neighbor to query point $q$ counts as a partial success or failure with weight $\exp(-\eta \|q - q_{near}\|)$.
For the 2-DOF planning environments, we use $\eta = 10^3$.
To estimate the posterior probability that an edge is collision-free, we take the minimum collision-free probability of 5 discretized points along the edge.

\subsubsection{Collision-Checking}

Each edge is collision-checked up to a fixed resolution via binary search.
For the 2-DOF planning environments, both dimensions range from 0 to 1 and edges are checked at a resolution of 0.001.
Edges are checked at a resolution of 0.2 for the 7-DOF manipulator planning environments.

\subsubsection{POMP}

In each iteration, $\alpha$ controls the trade-off between collision probability and edge weight.
The algorithm starts with $\alpha = 0$ and increases to $\alpha = 1$ as new feasible paths are discovered.
We use the same step size of 0.1 as the original paper.

\subsubsection{RRT+PS and RRT*}

We used the OMPL implementations of RRTConnect and RRT*.
On each of the 7-DOF environments, the algorithms search until they discover a path that is shorter than the shortest path in the graph (or until a maximum time limit has been exceeded).
RRT+PS was evaluated with a 5 second timeout, while RRT* needed an increased limit of 30 seconds to return feasible solutions for all environments.

Path shortcutting is implemented with the OMPL default parameters.
For both RRTConnect and RRT*, we set the range to infinity.
Intermediate states were added to the tree for RRTConnect.
For RRT*, we turned on lazy collision-checking and focused search (for pruning and informed sampling once a feasible path was discovered).
We used the default RRT* rewiring factor of 1.1.

\subsection{Complete Experiments}
\label{sec:complete-results}

For the 7-DOF manipulator experiments, we compared with the stronger baseline of RRT+PS rather than RRT*.
While the RRT* algorithm is an anytime algorithm with asymptotic optimality guarantees, it is much slower than RRT+PS in practice (\fref{fig:plots_7d_rrt}).

Fig.~\ref{fig:plots_2d_a} and \ref{fig:plots_2d_b} show results for the remaining five 2-DOF datasets not evaluated in \sref{sec:results}.
Example problems from each dataset are visualized in the left column.
The center two columns show the length of the best feasible path discovered by each anytime algorithm over time, with the nearest-neighbor posterior on the left and finite set posterior on the right.
In the last column, the collision checking budget is plotted versus the percentage of planning problems where that budget is sufficient to discover a feasible path.

\begin{figure*}
\centering

\hfill
\includegraphics[height=0.27\linewidth]{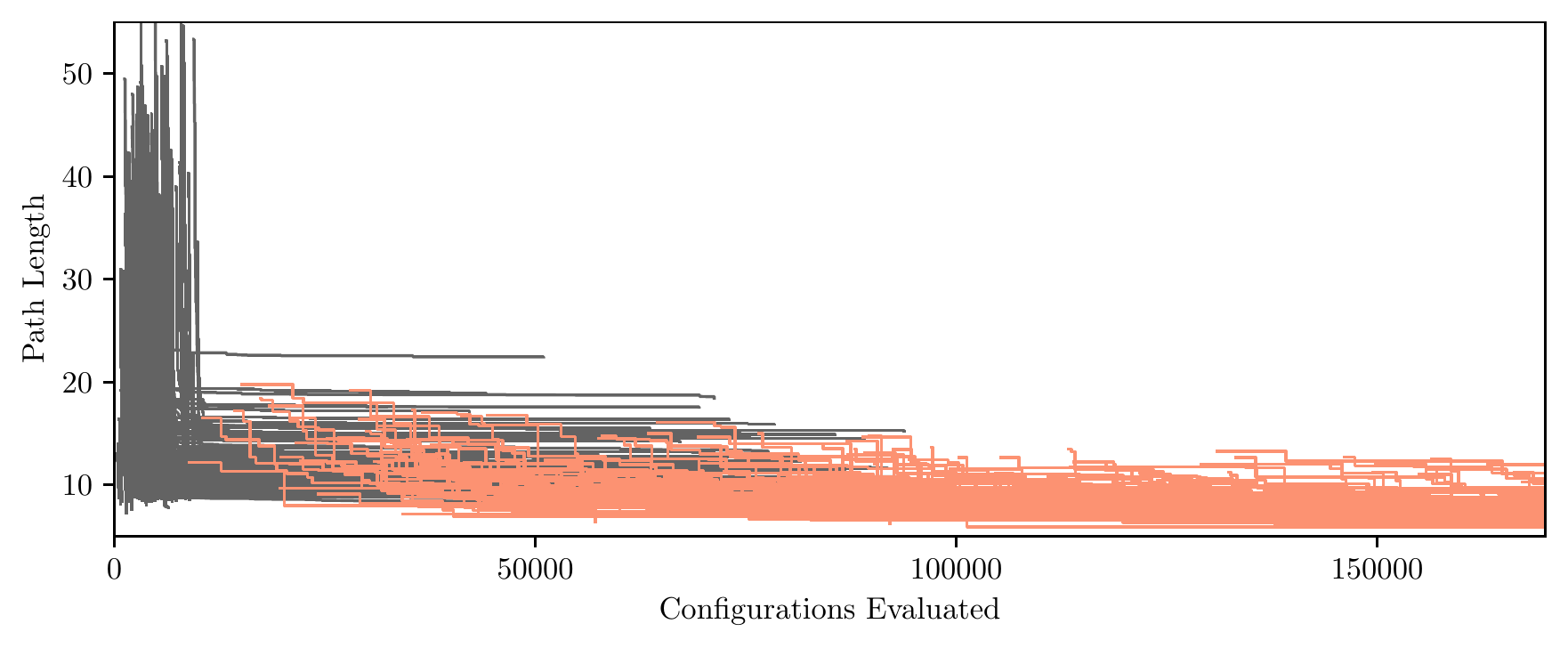}
\hfill
\includegraphics[height=0.27\linewidth]{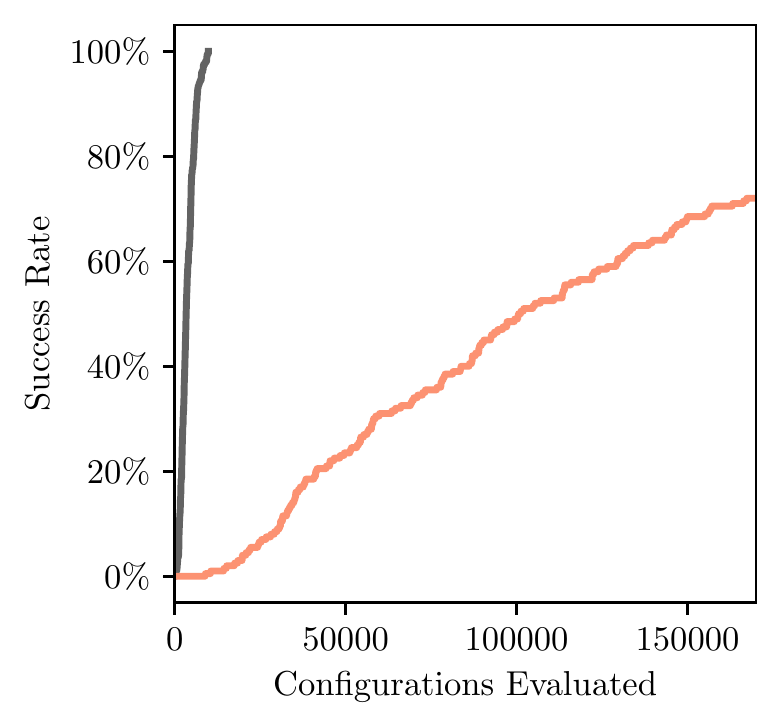}

\includegraphics[width=0.5\linewidth]{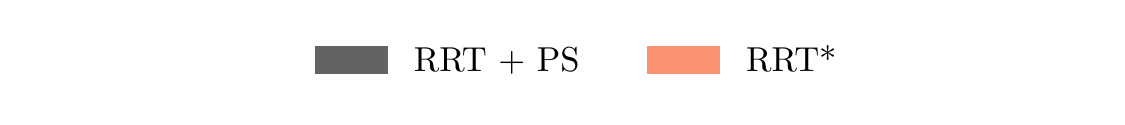}

\caption{
  Compared to RRT+PS, RRT* requires many more collision checks before it can emit any feasible solutions.
  The x-axis of this plot is about 100 times larger than in the bottom row of \fref{fig:plots}.
  Even with this significantly larger collision-checking budget, RRT* only finds a feasible path in 70\% of environments.
  However, the initial feasible paths RRT* discovers are much shorter than those from RRTConnect.
}
\label{fig:plots_7d_rrt}
\end{figure*}

\begin{figure*}
\centering

\begin{subfigure}{0.08\linewidth}
\frame{\includegraphics[height=\linewidth]{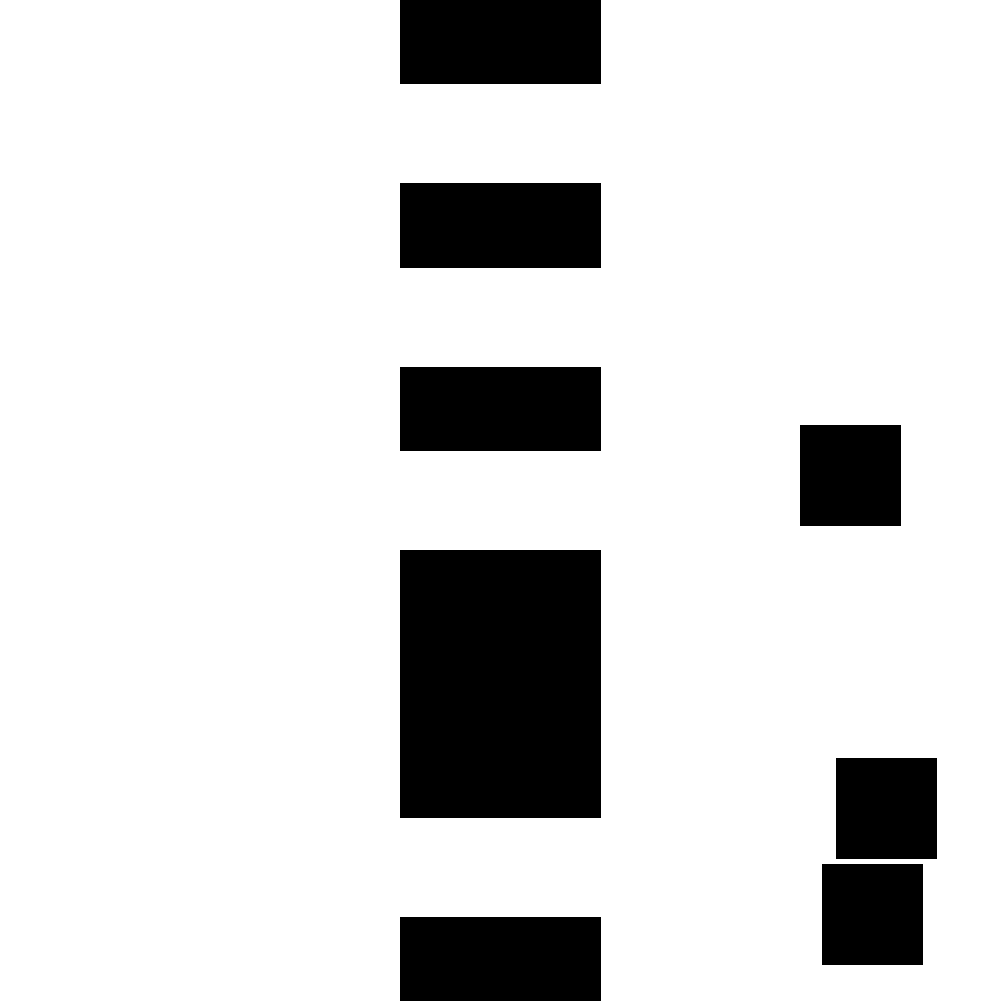}}

\frame{\includegraphics[height=\linewidth]{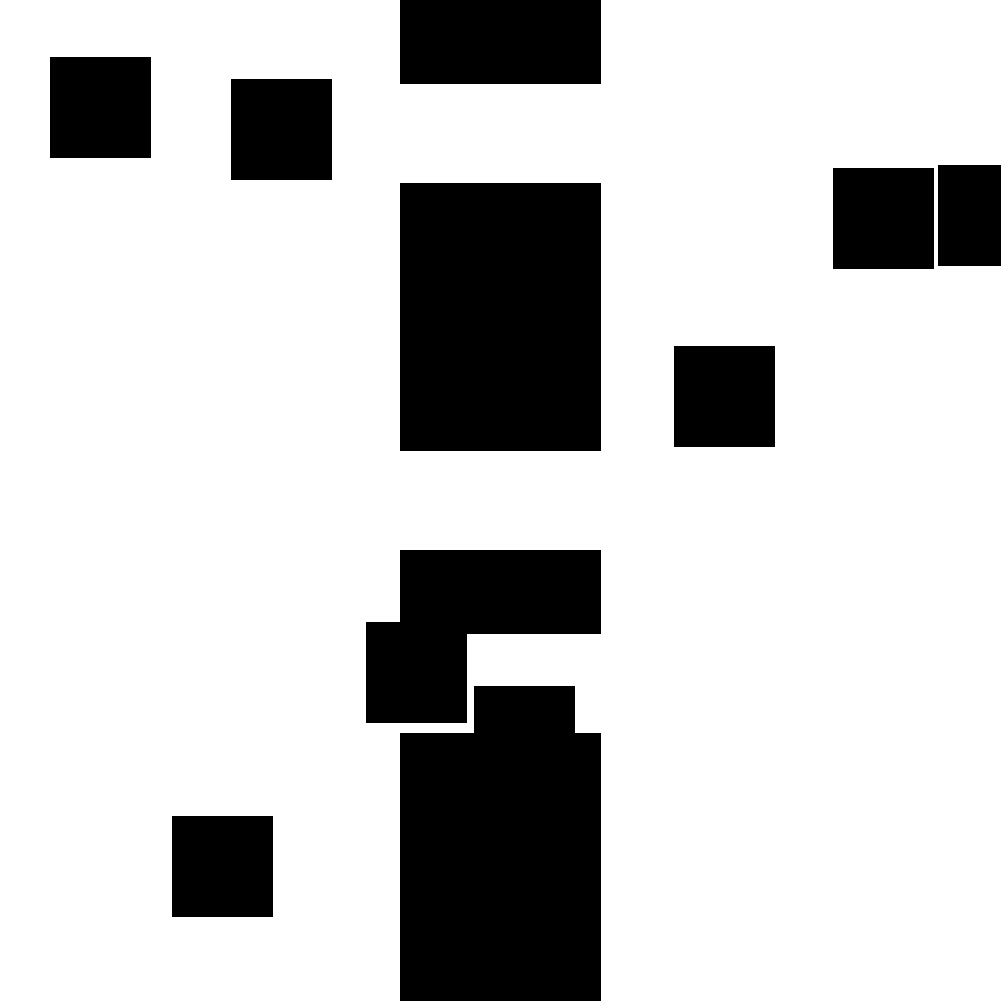}}

\frame{\includegraphics[height=\linewidth]{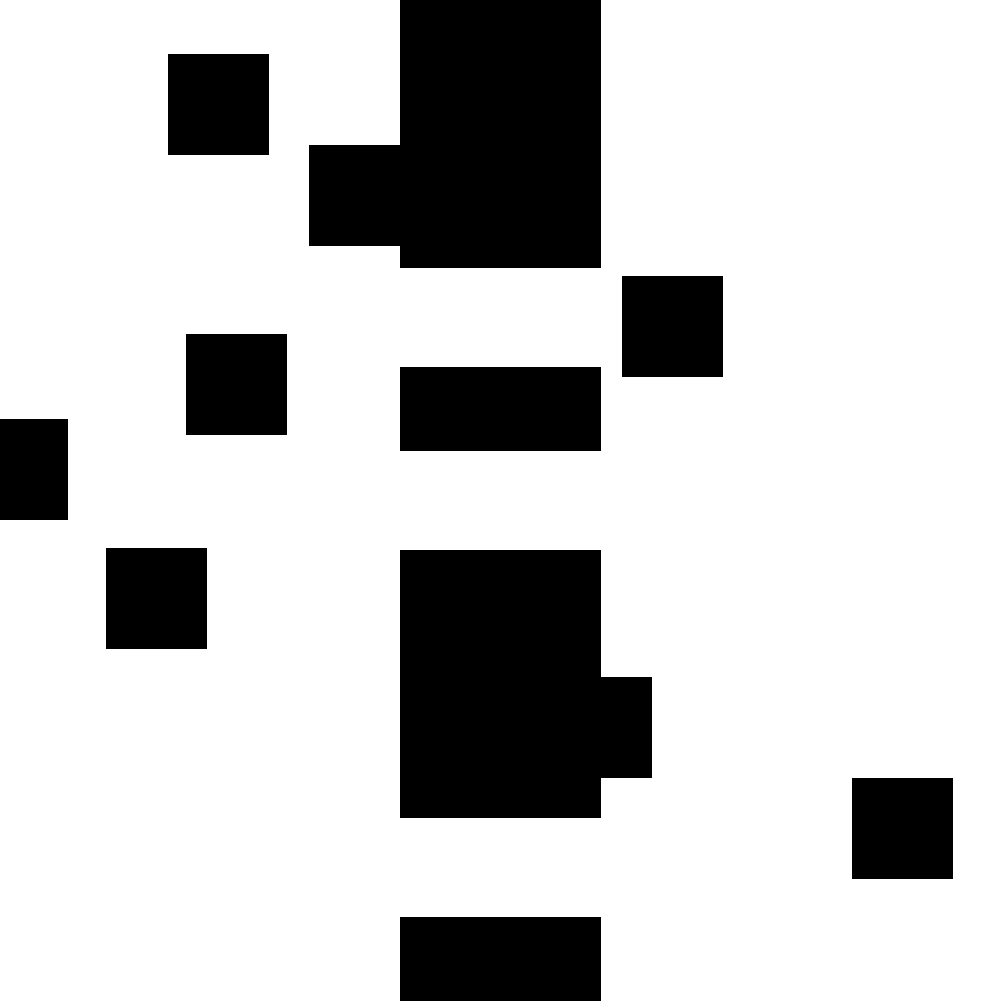}}
\end{subfigure}
\begin{subfigure}{0.9\linewidth}
\hfill
\includegraphics[height=0.3\linewidth]{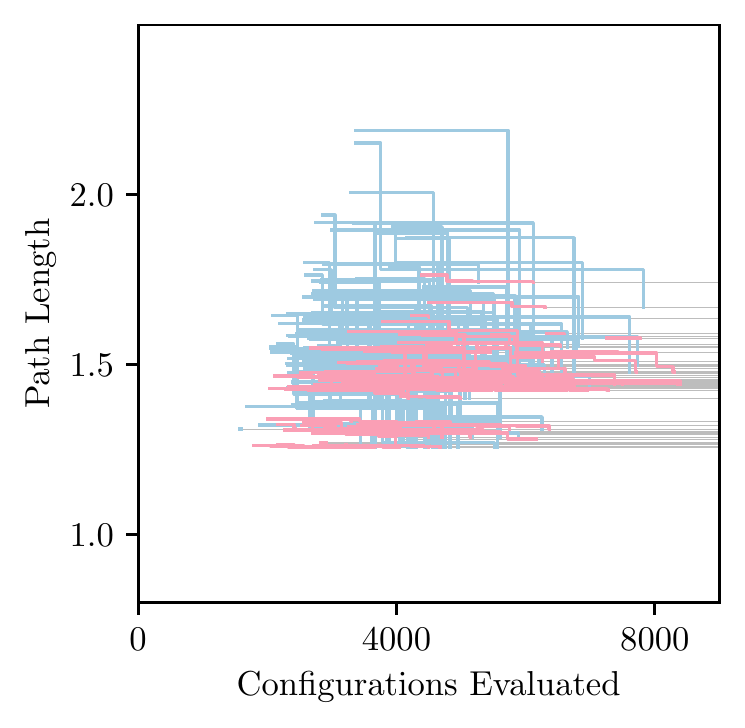}
\hfill
\includegraphics[height=0.3\linewidth]{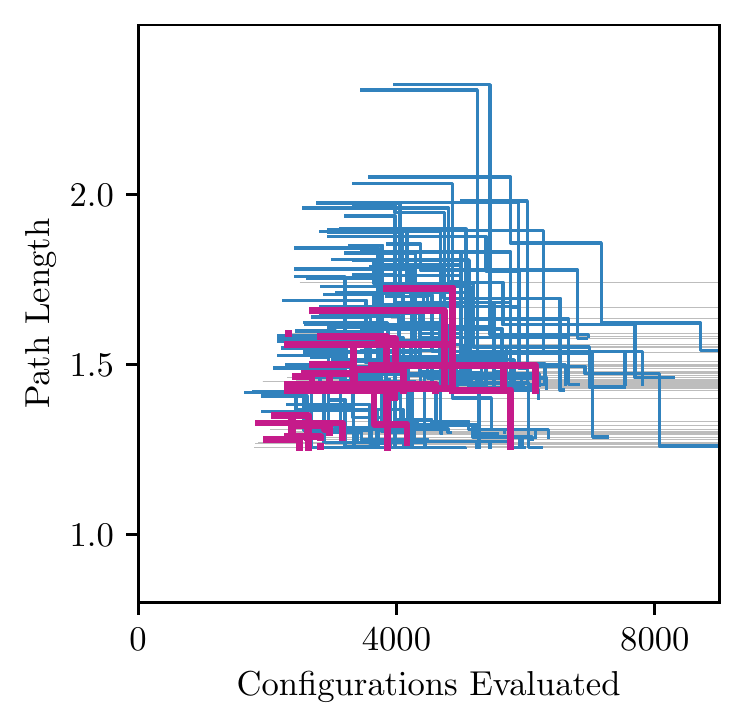}
\hfill
\includegraphics[height=0.3\linewidth]{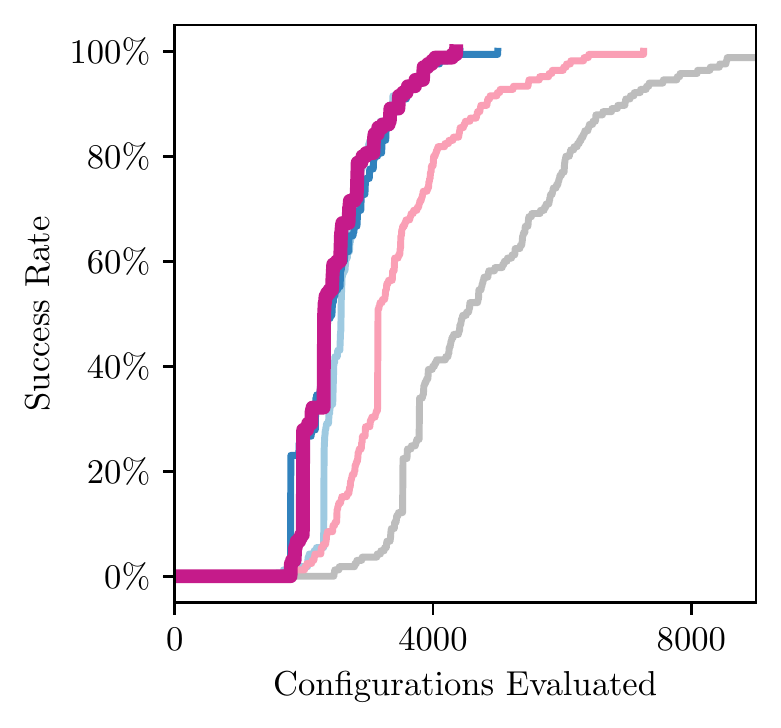}
\end{subfigure}

\vspace{1em}

\begin{subfigure}{0.08\linewidth}
\frame{\includegraphics[height=\linewidth]{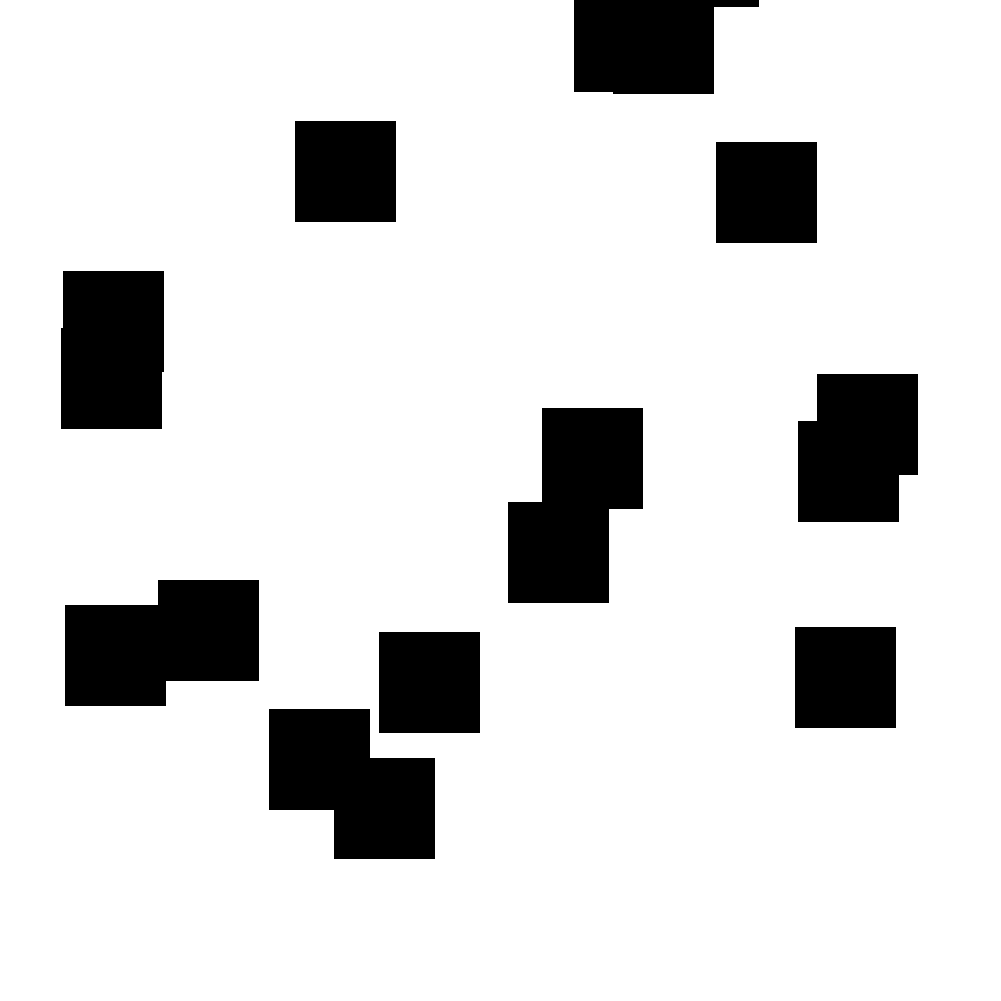}}

\frame{\includegraphics[height=\linewidth]{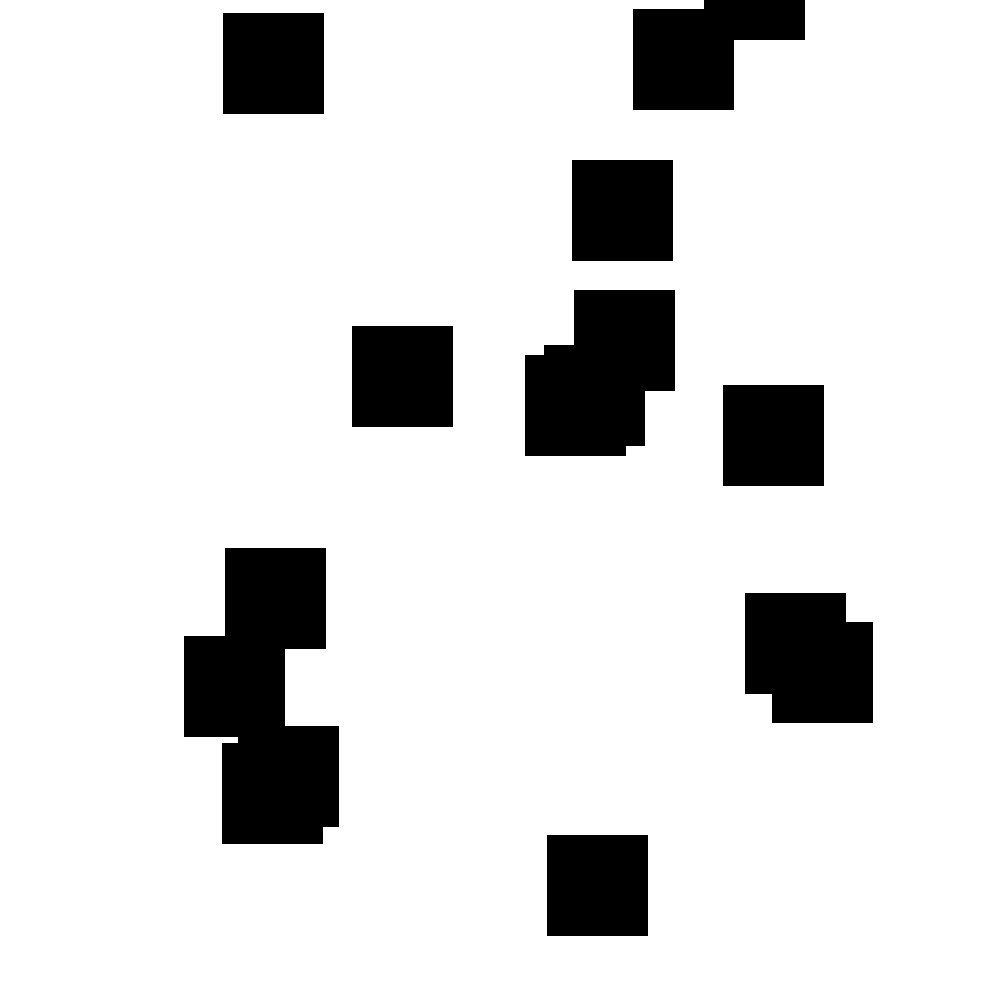}}

\frame{\includegraphics[height=\linewidth]{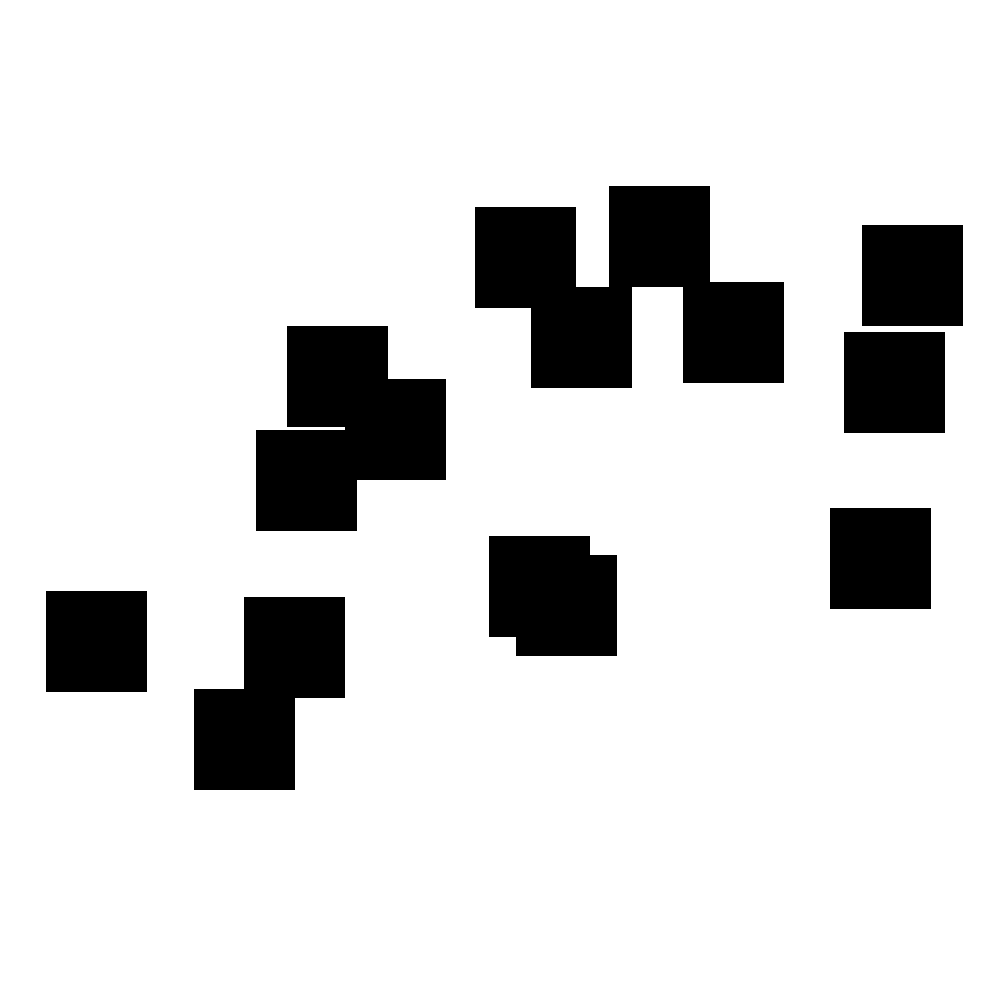}}
\end{subfigure}
\begin{subfigure}{0.9\linewidth}
\hfill
\includegraphics[height=0.3\linewidth]{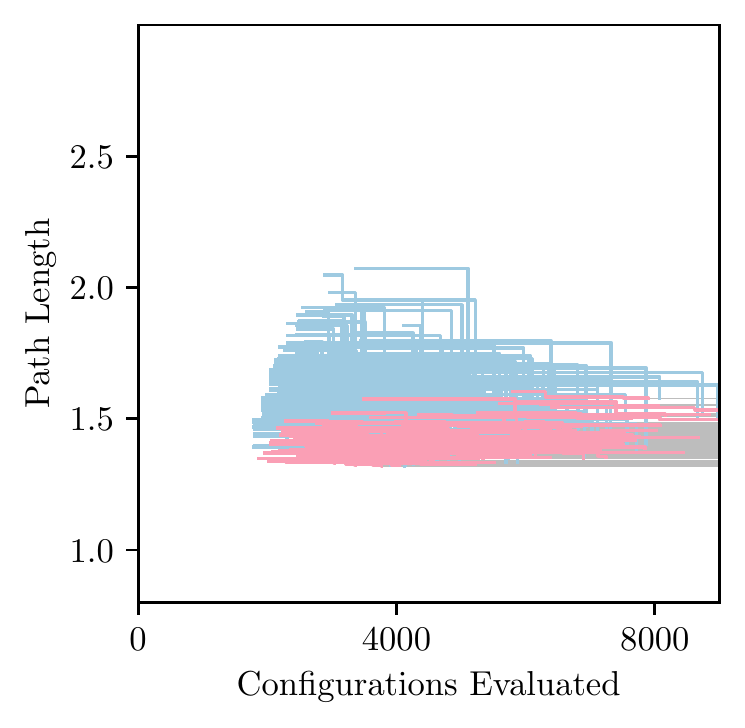}
\hfill
\includegraphics[height=0.3\linewidth]{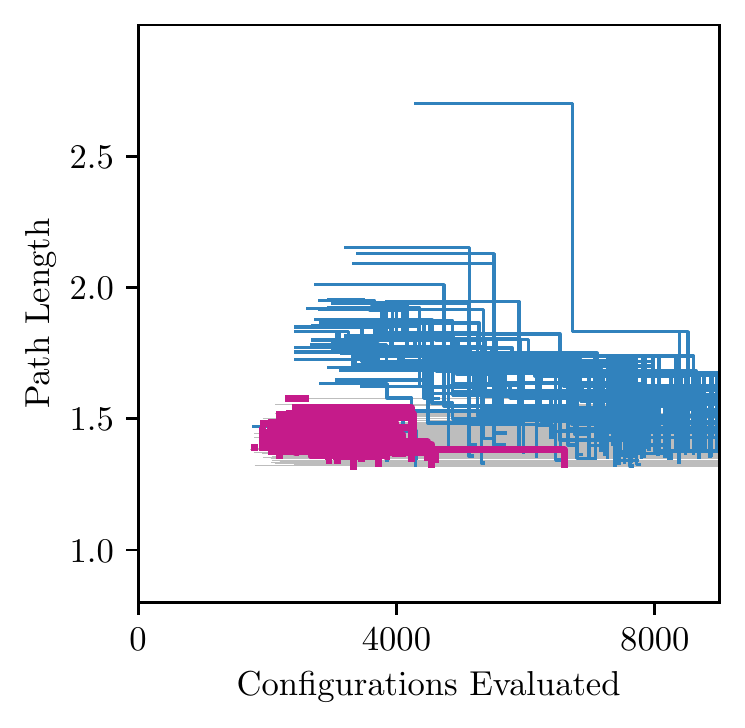}
\hfill
\includegraphics[height=0.3\linewidth]{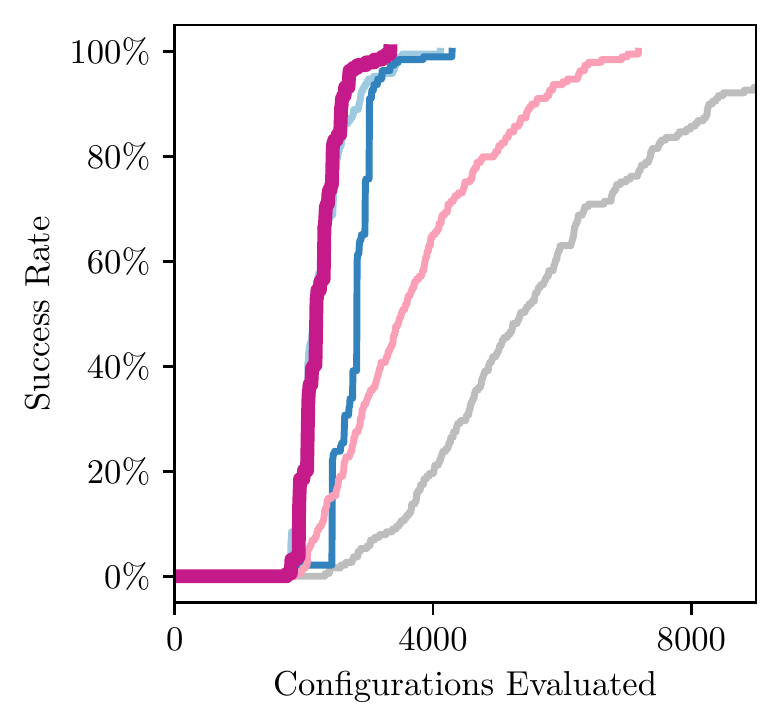}
\end{subfigure}

\vspace{1em}

\includegraphics[width=0.5\linewidth]{figs/legend.pdf}

\caption{
  2-DOF environments.
  (Left) Example problems from each dataset.
  (Center) Length of the best feasible path discovered by each anytime algorithm over time, using the nearest-neighbor posterior (column 2) and finite set posterior (column 3).
  (Right) Collision checking budget versus the percentage of planning problems where that budget is sufficient to discover a feasible path.
}
\label{fig:plots_2d_a}
\end{figure*}

\begin{figure*}
\centering

\begin{subfigure}{0.08\linewidth}
\frame{\includegraphics[height=\linewidth]{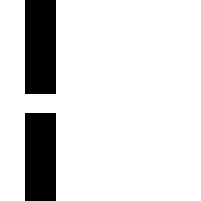}}

\frame{\includegraphics[height=\linewidth]{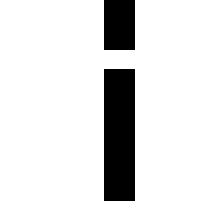}}

\frame{\includegraphics[height=\linewidth]{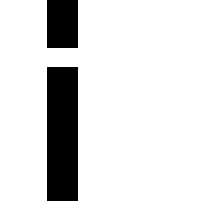}}
\end{subfigure}
\begin{subfigure}{0.9\linewidth}
\hfill
\includegraphics[height=0.3\linewidth]{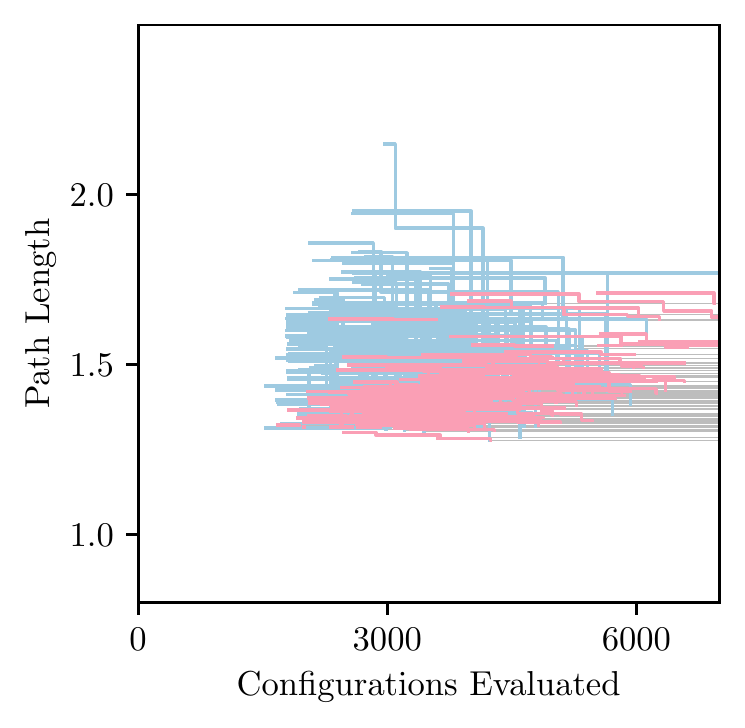}
\hfill
\includegraphics[height=0.3\linewidth]{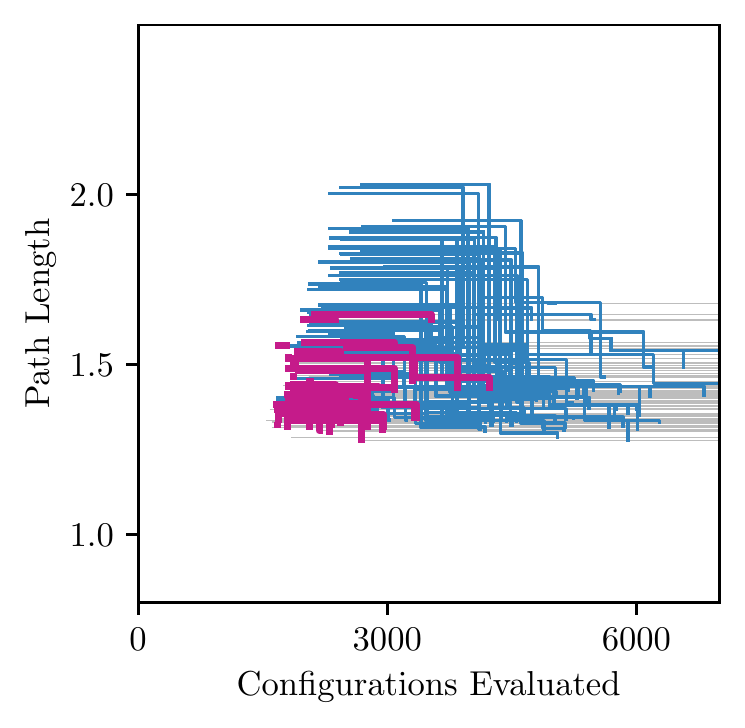}
\hfill
\includegraphics[height=0.3\linewidth]{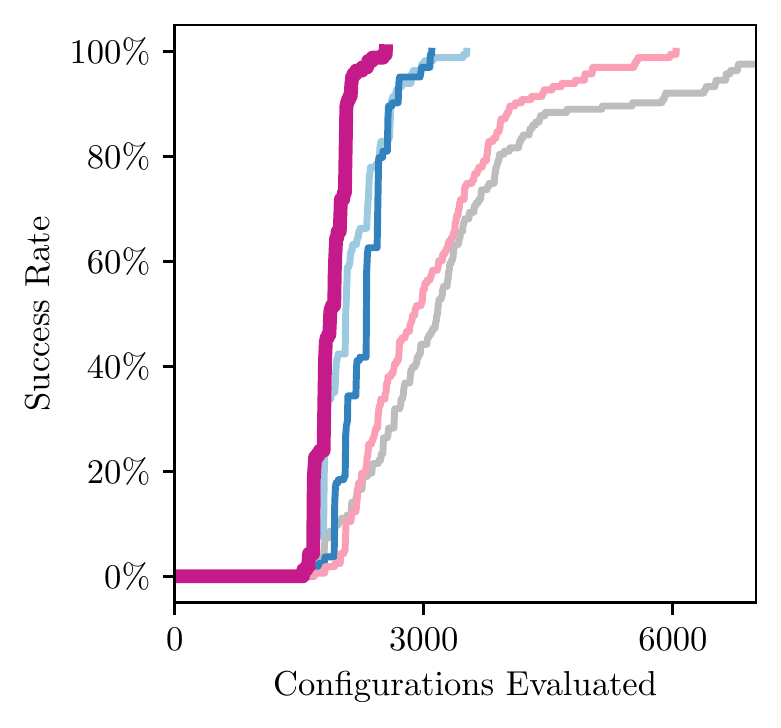}
\end{subfigure}

\vspace{1em}

\begin{subfigure}{0.08\linewidth}
\frame{\includegraphics[height=\linewidth]{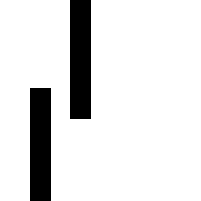}}

\frame{\includegraphics[height=\linewidth]{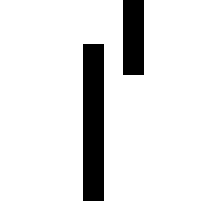}}

\frame{\includegraphics[height=\linewidth]{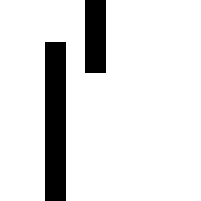}}
\end{subfigure}
\begin{subfigure}{0.9\linewidth}
\hfill
\includegraphics[height=0.3\linewidth]{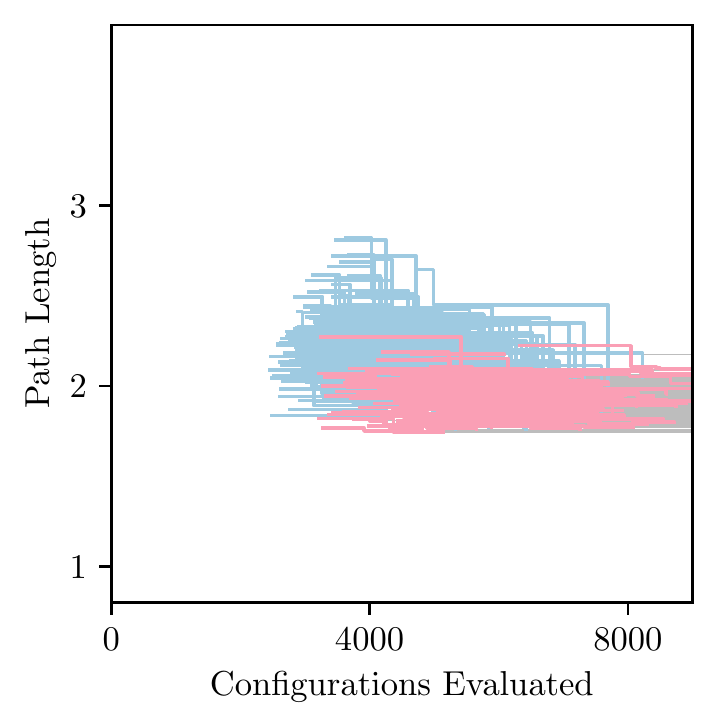}
\hfill
\includegraphics[height=0.3\linewidth]{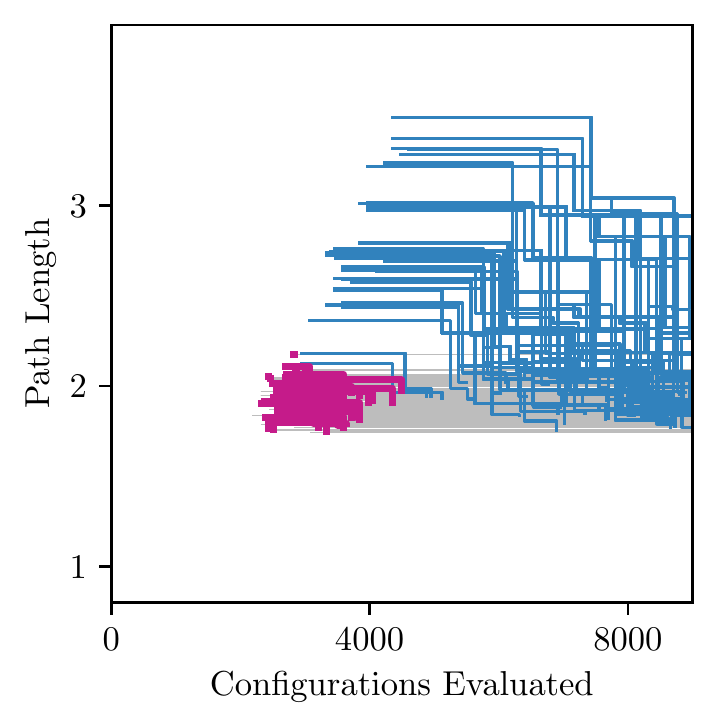}
\hfill
\includegraphics[height=0.3\linewidth]{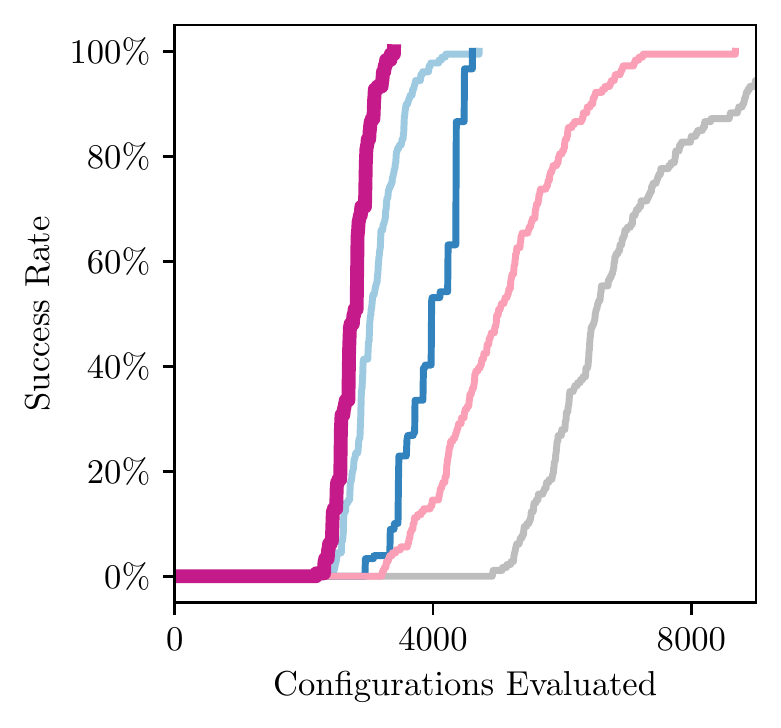}
\end{subfigure}

\vspace{1em}

\begin{subfigure}{0.08\linewidth}
\frame{\includegraphics[height=\linewidth]{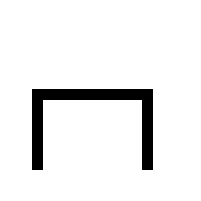}}

\frame{\includegraphics[height=\linewidth]{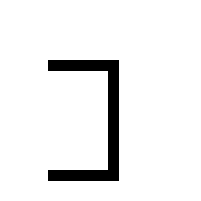}}

\frame{\includegraphics[height=\linewidth]{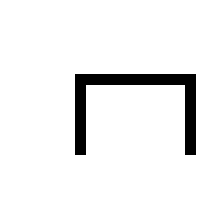}}
\end{subfigure}
\begin{subfigure}{0.9\linewidth}
\hfill
\includegraphics[height=0.3\linewidth]{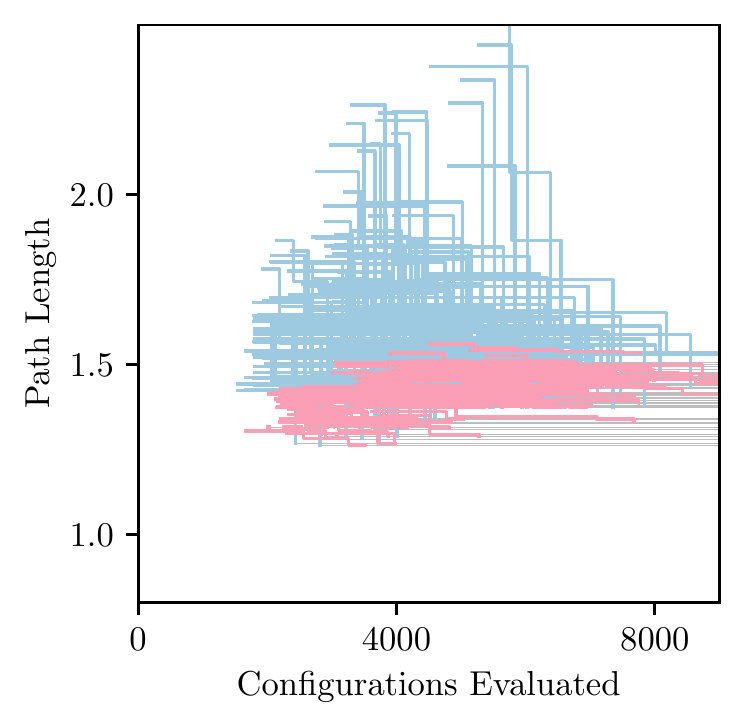}
\hfill
\includegraphics[height=0.3\linewidth]{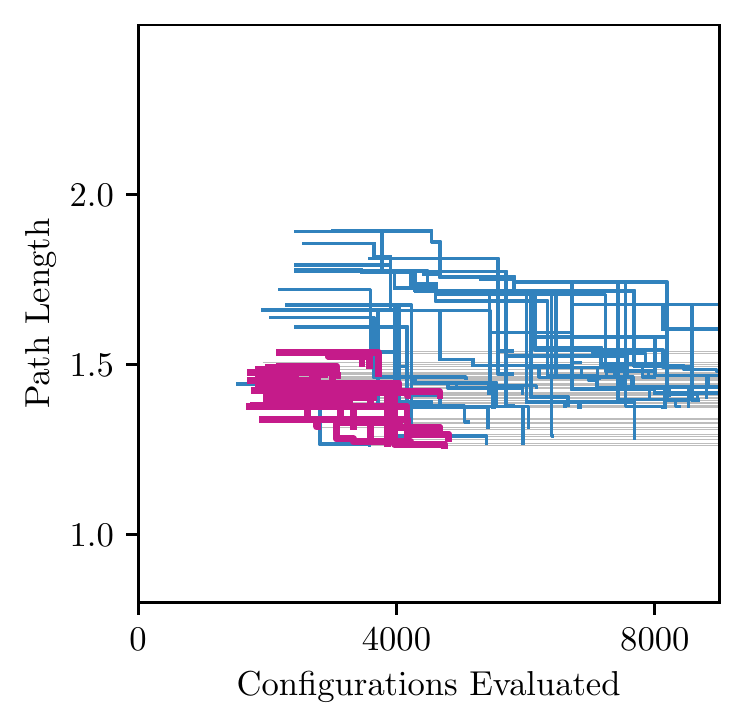}
\hfill
\includegraphics[height=0.3\linewidth]{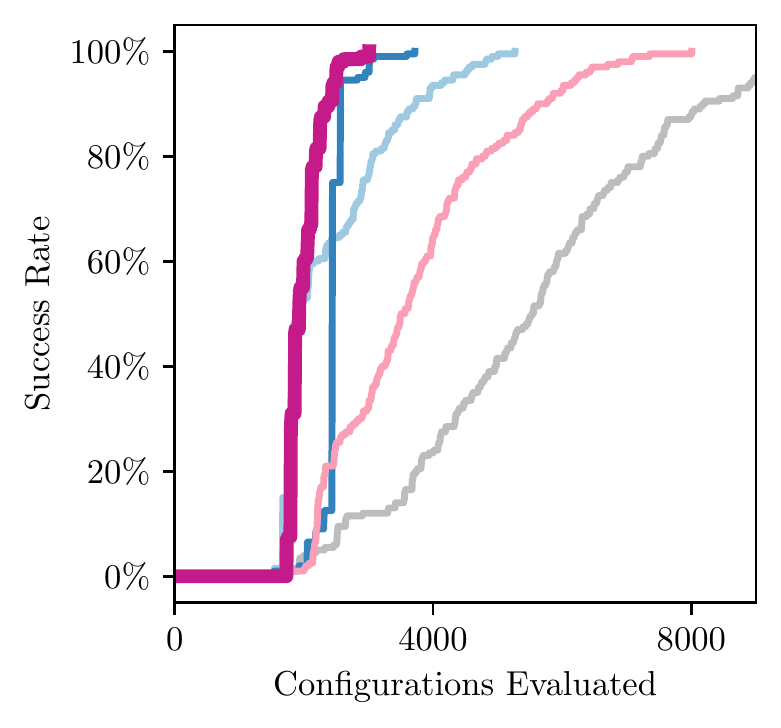}
\end{subfigure}

\vspace{1em}

\includegraphics[width=0.5\linewidth]{figs/legend.pdf}

\caption{
  2-DOF environments.
  (Left) Example problems from each dataset.
  (Center) Length of the best feasible path discovered by each anytime algorithm over time, using the nearest-neighbor posterior (column 2) and finite set posterior (column 3).
  (Right) Collision checking budget versus the percentage of planning problems where that budget is sufficient to discover a feasible path.
}
\label{fig:plots_2d_b}

\end{figure*}

\end{document}